\documentclass[10pt,a4paper]{article}

\usepackage{microtype}
\usepackage{graphicx}
\usepackage{subcaption}
\usepackage{booktabs} 
\usepackage{amsmath,amsthm,amsfonts,amssymb,verbatim}
\usepackage{tikz}
\usepackage{hyperref}
\usepackage{todonotes}
\usepackage{algorithm,algorithmicx}
\usepackage{algpseudocode}
\usepackage{diagbox}
\newtheorem{theorem}{Theorem}

\newtheorem{lemma}[theorem]{Lemma}
\newtheorem{corollary}[theorem]{Corollary}

\newtheorem{example}[theorem]{Example}

%

\def\R{\mathbb{R}}
\def\N{\mathbb{N}}

\DeclareMathOperator*{\argmax}{arg\,max}

\usepackage{authblk}

\begin{document}

\title{Ensemble Methods for Robust Support Vector Machines using Integer Programming}

\author[1]{Jannis Kurtz\footnote{j.kurtz@uva.nl}}

\affil[1]{Amsterdam Business School, University of Amsterdam, 1018 TV Amsterdam, Netherlands}

\date{}

\maketitle

\begin{abstract}
In this work we study binary classification problems where we assume that our training data is subject to uncertainty, i.e. the precise data points are not known. To tackle this issue in the field of robust machine learning the aim is to develop models which are robust against small perturbations in the training data. We study robust support vector machines (SVM) and extend the classical approach by an ensemble method which iteratively solves a non-robust SVM on different perturbations of the dataset, where the perturbations are derived by an adversarial problem. Afterwards for classification of an unknown data point we perform a majority vote of all calculated SVM solutions. We study three different variants for the adversarial problem, the exact problem, a relaxed variant and an efficient heuristic variant. While the exact and the relaxed variant can be modeled using integer programming formulations, the heuristic one can be implemented by an easy and efficient algorithm. All derived methods are tested on random and realistic datasets and the results indicate that the derived ensemble methods have a much more stable behaviour when changing the protection level compared to the classical robust SVM model.
\end{abstract}

\noindent\textbf{Keywords:} Robust Optimization, Support Vector Machines, Mixed-Integer Programming, Ensemble Methods

\section{Introduction}
Many practical applications can be modeled as binary classification problems, i.e. in a given data space we want to correctly assign one of two classes to each data point. Binary classification problems appear in various fields as text or document classification, speech recognition, fraud detection, medical diagnosis and many more; see \cite{ma2014support,wang2005support} for an overview. 

One popular approach to tackle binary classification problems are so called \textit{support vector machines} where, given a set of training data, the basic idea is to find a separating hyperplane which separates the data points of one class from the ones of the other class. Afterwards an unseen data point is classified by checking on which side of the hyperplane it is located. The first appearance of this approach dates back to the 1960s and was introduced in \cite{vapnik1964note} for the case of separable data. Later in the 1990s the approach was generalized to non-separable data in \cite{cortes1995support} and was studied intensively since then. The approach attained enormous popularity due to its simplicity and efficiency. Additionally it provides good theoretical generalization bounds which were derived by methods from statistical learning theory \cite{vapnik1998statistical,scholkopf1997support}. The SVM was later also generalized to multiclass classification problems; see \cite{hsu2002comparison}. For a derivation of the theoretical foundations of SVM see \cite{mohri2018foundations}.

To improve the accuracy of SVM models, ensemble methods were studied; see \cite{wang2009empirical} for an empirical comparison. In general the idea behind ensemble methods is to calculate a set of $k$ different classifiers and aggregate their decision by performing a majority vote. The two most popular ensemble methods are bagging and boosting \cite{kim2003constructing}. Bagging methods generate $k$ different training sets by randomly drawing samples from the original training set via a bootstrap technique \cite{breiman1996bagging,kim2002support}. On the other hand boosting methods iteratively generate new classifiers by defining new sample weights in each iteration, where the weights for data samples which are misclassified by the already determined classifiers are increased \cite{freund1996experiments}. One of the most famous boosting methods is AdaBoost \cite{freund1997decision,xing2020robust}.

While all the classical SVM models assume the training data to be precisely known, in practical applications the recorded data points can often be subject to uncertainty. This can be due to measurement or rounding errors or due to human errors. Furthermore in applications as spam mail detection or fraud detection an adversarial may be interested in changing its data point slightly such that a trained model is not able to classify it correctly \cite{kurakin2016adversarial}. Motivated by the field of \textit{robust optimization} (\cite{ben2009robust,gorissen2015practical,bertsimas2011theory,buchheim2018robust}) the classical SVM approach was extended to the robust setting where for each data point an uncertainty set is defined which contains all possible perturbations we want to be protected against. The task is then to find a separating hyperplane which separates the data points of one class and all of its perturbations from the data points of the other class and all its perturbations. This approach was already studied in several publications \cite{xu2009robustness,bertsimas2019robust,maggioni2021robustsvm,trafalis2007robust,trafalis2006robust, el2003robust}. In \cite{xu2009robustness} the robust model was studied for different classes of uncertainty sets and it was shown that adding robustness to the SVM model leads to implicit regularization. In \cite{bertsimas2019robust} the robust SVM model was analyzed for data uncertainty and label uncertainty. The authors in \cite{maggioni2021robustsvm} computationally compare the robust SVM model and the distributionally robust SVM model for several types of uncertainty sets. 

While adding robustness to machine learning models often leads to better generalization errors on perturbed data, the accuracy on clean data often decreases compared to the non-robust models. This \textit{trade-off between robustness and accuracy} was extensively studied in the ML literature \cite{zhang2019theoretically,raghunathan2020understanding,dobriban2020provable,tsipras2018robustness} and it was argued that robust models need larger training sets \cite{schmidt2018adversarially}. Furthermore the size of the uncertainty set which is used during the training process influences the mentioned trade-off and since we do not know in advance which size the future perturbations will have it is not well-defined what performance of a robust model is to be achieved. While increasing the size of the uncertainty set can lead to better accuracies on strongly perturbed data the accuracy can decrease on slightly perturbed or clean data. Hence finding the right size of the uncertainty sets is a difficult task which has to be solved by the user.

\paragraph{Contributions}
While there is comprehensive literature on ensemble methods for non-robust support vector machines, to the best of our knowledge ensemble methods were not used to improve the robustness of the SVM model regarding uncertainty in the data. In this work we present the following contributions:
\begin{itemize}
\item We derive a robust ensemble method which iteratively solves a non-robust SVM on different perturbations of the dataset, where the perturbations are derived by an adversarial problem. Afterwards we classify an unknown data point by performing a majority vote of all calculated SVM solutions. While in the classical robust SVM model we can only calculate one single hyperplane which has to protect us against all possible data perturbations, in the ensemble model the uncertainty is distributed over a set of hyperplanes. 
\item We study the exact adversarial problem and show that it can be modeled by an integer programming formulation for all classical uncertainty sets.
\item We propose a relaxed version of the adversarial problem where the average hinge-loss over all perturbation vectors is maximized. Using results from convex analysis we derive two integer programming reformulations for $\ell_1$ and $\ell_\infty$-uncertainty.
\item We propose an efficient heuristic variant of the adversarial problem, where the adversarial perturbation is calculated by a weighted mean of the hyperplane normal vectors.
\item We test all three methods on random and realistic datasets and compare the results to the classical robust SVM model and a non-robust ensemble SVM model based on bagging. 
\end{itemize}

The paper is organized as follows: in Section \ref{sec:preliminaries} we introduce the notation, the framework of classical SVM and the framework of classical robust SVM. In Section \ref{sec:MRO-SVM} we then introduce our robust ensemble method and afterwards study all three variants of the adversarial problem. Finally in Section \ref{sec:computations} we test all methods computationally and analyze the results which are concluded in Section \ref{sec:conclusion}.

\section{Preliminaries}\label{sec:preliminaries}

\subsection{Notation}
We define $[k]=\{1,\ldots ,k\}$ for all $k\in \N$ and the set of all non-negative real vectors is defined by $\R_+^n:=\left\{ x\in\R^n \ | \ x\ge 0\right\}$. For an arbitrary norm $\|\cdot\|$ in $\R^n$ we denote its dual norm by $\|\cdot \|^*$ which is defined as
\[
\|v\|^*= \sup_{w\in\R^n: \|w\|\le 1} v^\top w  \quad \forall v\in\R^n.
\]
The $\ell_p$-norm of $v\in\R^n$ where $p\in \N$ is denoted by $\|\cdot \|_p$ and defined as
\[
\| v\|_p:=\left( \sum_{i\in [n]} v_i^p \right)^\frac{1}{p}
\]
and the $\ell_\infty$-norm is defined as
\[
\| v\|_\infty:=\max_{j\in [n]} |v_j| .
\]
Furthermore we define the sign of a value $x\in \R$ by
\[
\text{sgn}(x):=\begin{cases} 1 & x\ge 0 \\ -1 & x<0 \end{cases} .
\]
Note that we make the unusual assumption that sgn$(0)=1$ since later we want to assign one of the two possible classes in $\{-1,1\}$ to each data point by using the sign-function and we therefore have to assign one of the two possibilities to the $0$-value. Nevertheless assigning $-1$ instead of $1$ is also possible. Finally we define $[x]_+=\max\{x,0\}$ for each $x\in\R$.

\subsection{Support Vector Machines}
Given a labeled training set $\mathcal D = \{(x^1,y^1), \ldots ,(x^m,y^m)\}$ with data points $x^j\in\R^n$ and labels $y^j\in \{-1,1\}$ for each $j\in [m]$, the idea of the classical support vector machine approach is to find a hyperplane $H:=\left\{ x\in\R^n: \ w^\top x + b=0\right\}$, where $w\in \R^n$ and $b\in \R$, such that the hyperplane separates the data set by its class labels. More precisely we want to find a hyperplane $H$ such that all data samples with label $y^j=1$ lie on one side of the hyperplane and all data samples with label $y_j=-1$ lie on the other side of the hyperplane in which case it must hold
\[
y^j(w^\top x^j + b)\ge 0 \quad \forall \ j\in[m].
\]
Clearly it is not always possible to separate the data by its class labels in which case we call the data set \textit{non-separable}. After we calculated an appropriate hyperplane, for each data point $x\in\R^n$ we assign the class $y=\text{sgn}(w^\top x + b)$. Since the sign function is discontinuous, instead of minimizing the true empirical error, we minimize the \textit{hinge-loss} which is defined as
\[
[1-y^j(w^\top x^j + b)]_+ \quad \forall \ j\in[m].
\]
Note that the hinge loss is a convex and continuous function over the weight-variables $w$ and $b$. The classical linear SVM approach is then to calculate the optimal hyperplane parameters $w^*,b^*$ of the convex problem
\begin{equation}\label{eq:svm}\tag{SVM}
\min_{w\in\R^n, b\in\R} \ \| w\|_2^2 + \sum_{j=1}^{m} c_j[1- y^j(w^\top x^j + b)]_+ 
\end{equation}
which is equivalent to the quadratic optimization problem
\begin{equation}
\begin{aligned}
\min_{w,b,\xi} \ & \| w\|_2^2 + \sum_{j=1}^{m}c_j \xi_j \\
s.t. \quad \ & \xi_j \ge 1- y^j(w^\top x^j + b)\quad \forall j\in [m] \\
& w\in\R^n, b\in\R, \xi\in \R_+^m .
\end{aligned}
\end{equation}
Here $c_j\in\R_+$ are fixed hyper-parameters which can be used to adjust the impact each data point has on the empirical loss. Often these parameters are all set to a constant $C\in\R_+$ (e.g. $C=1$) or are derived in a validation process. Note that the regularization term $\| w\|_2^2$ is used to maximize the margin between the hyperplane $H$ and the data. The points $x^j$ with $y^j(w^\top x^j + b) =1$ are called \textit{support vectors} (see \cite{mohri2018foundations} for detailed explanations).

\subsection{Robust Support Vector Machines}
In the robust setting we assume that each data point can be perturbed by a vector $\delta$ which is contained in a bounded region. More precisely for each training sample $x^j$ we define a radius $r_j\ge 0$ and an \textit{uncertainty set} \[U_j=\left\{ \delta\in \R^n \ | \ \|\delta \|\le r_j\right\}\] containing all possible perturbation vectors. A perturbed data point is then given by $x^j + \delta$ where $\delta\in U_j$. Here we can choose an arbitrary norm $\|\cdot \|$ to define the sets $U_j$. Often either the $\ell_1,\ell_2$ or $\ell_\infty$ norm is used (see \cite{maggioni2021robustsvm} for a computational comparison). We assume that a perturbed data point $x^j$ has the same label $y^j$ as the original data point. As for the classical SVM model the goal is to find a separating hyperplane $H_{\text{RO}}:=\left\{ x\in\R^n: \ w^\top x + b=0\right\}$ which in this case is robust against all possible data perturbations contained in the uncertainty sets, i.e. which predicts the true label of $x^j$ for all points in $x_j+U_j$. Such an optimal robust hyperplane is given by an optimal solution $w^*,b^*$ of the problem
\begin{equation}\tag{RO-SVM}\label{eq:ro-svm}
\min_{w\in\R^n, b\in\R} \ \sum_{j=1}^{m} \max_{\delta^j \in U_j} \ [1- y^j(w^\top (x^j+\delta^j) + b)]_+
\end{equation}
which can be reformulated by level-set transformation as
\begin{equation}
\begin{aligned}
\min_{w,b,\xi} \ & \sum_{j=1}^{m} \xi_j \\
s.t. \quad \ & \xi_j \ge 1- y^j(w^\top (x^j+\delta) + b) \quad \forall \delta\in U_j, \ j\in [m] \\
& w\in\R^n, b\in\R, \xi\in \R_+^m .
\end{aligned}
\end{equation}
Note that the latter problem has infinitely many constraints and hence cannot be used for practical computations in this form. However by using the classical reformulation used in robust optimization we can reformulate the problem as follows. For each $j\in [m]$ the constraints
\[
\xi_j \ge 1- y^j(w^\top (x^j+\delta) + b) \quad \forall \delta\in U_j
\]
can equivalently be reformulated as 
\[
\xi_j \ge \max_{\delta\in U_j} \ 1- y^j(w^\top (x^j+\delta) + b)
\]
which is equivalent to
\begin{equation}\label{eq:robustConstraint}
\xi_j \ge 1- y^j(w^\top x^j + b) + r_j\|w\|^*
\end{equation}
by applying the definition of the dual norm. Substituting the latter constraint for each $j\in [m]$ into the problem we obtain the reformulation
\begin{equation}\label{eq:robustSVMreformulation}
\begin{aligned}
\min_{w,b,\xi} \ & \sum_{j=1}^{m} \xi_j \\
s.t. \quad \ & \xi_j \ge 1- y^j(w^\top (x^j) + b) + r_j\|w\|^* \quad \forall \ j\in [m] \\
& w\in\R^n, b\in\R, \xi\in \R_+^m 
\end{aligned}
\end{equation}
which is now a problem with $m$ constraints. Note that depending on which norm $\|\cdot\|$ we choose to model the uncertainty sets the constraints can have different structures. Since the dual norm of the $\ell_2$-norm is the $\ell_2$-norm, in this case we obtain a quadratic problem. On the other hand for the $\ell_1$ and $\ell_\infty$-norm (which are the dual norms of each other) we can transform the latter problem into a linear problem by adding additional variables. Hence in all these cases Problem \eqref{eq:ro-svm} can be solved by state-of-the-art solvers like CPLEX or Gurobi \cite{cplex2009v12,gurobi}. Note that we do not add any regularization term to the objective function of Problem \eqref{eq:ro-svm} since it was shown in \cite{xu2009robustness} that adding robustness to the model induces regularization implicitly. This can also be seen in the Reformulation \eqref{eq:robustSVMreformulation}, since each constraint contains an implicit regularization term $r_j\|w\|^*$. 

One of the main difficulties in robust machine learning is finding the right size of the uncertainty set, i.e. the right defense radii $r_j$. While finding the right hyperparameters often can be done via a validation step, in robust machine learning the main problem is to define what is the ''right'' radius. One question is, if we should define different radii for different data points. This can be reasonable if the feature values of some data points have a different magnitude than the others or if this is a reasonable assumption for the specific application. A solution which is often used is to normalize the data and then choosing the same defense radius for each data point. Another problem is the following: As already mentioned in the introduction, adding robustness to a machine learning model normally leads to better accuracies on perturbed data since our model learned to be protected against perturbations. At the same time a reduction of its accuracy on clean data can be observed. The situation is even more complex since we can vary the size of the perturbations (also called attacks) which leads to different accuracy values for each attack-size. This development can be plotted in a trade-off curve (see Section \ref{sec:computations}). Since in practical applications we do not know in advance which size the attacks will have, it is not easy to decide which trade-off curve we desire. If we can assume that the attacks will be small we maybe want to have larger accuracies for small attacks while at the same time the accuracy for larger attacks is allowed to be worse. If we want to have a more robust model for large attacks we may desire the opposite behavior. One way to determine the right radius of our uncertainty sets is to calculate the average accuracy over all possible attack sizes and choose the radius which leads to the best average accuracy (see trade-off curves in Section \ref{sec:computations}). Nevertheless the best situation would be to find a robust model with a stable behaviour regarding the different radii. In the following section we derive an ensemble method which turns out to be much more stable for varying defense-levels than Problem \eqref{eq:ro-svm}, sometimes coming with a small reduction in accuracy.    

\section{Robust Ensemble Methods Based on Majority Votes}\label{sec:MRO-SVM}
In this section we derive an ensemble method to improve robustness and reduce the trade-off between robustness and accuracy of the classical robust SVM model. 

We assume the same setup as in Section \ref{sec:preliminaries}, i.e. we have given a labeled training set $\mathcal D = \{(x^1,y^1), \ldots ,(x^m,y^m)\}$ with data points $x^j\in\R^n$ and labels $y^j\in \{-1,1\}$ for each $j\in [m]$ and we assume that each training sample $x^j$ can be perturbed by any perturbation vector $\delta$ contained in the uncertainty set $U_j=\left\{ \delta\in \R^n \ | \ \|\delta \|\le r_j\right\}$ where $r_j\ge 0$ are given radii. The main drawback of the classical robust model \eqref{eq:ro-svm} is that we can only calculate a single hyperplane to hedge against all possible data perturbations for all training samples. Depending on the size of the uncertainty sets $U_j$ this can lead to bad solutions where for each sample $x^j$ a perturbation $\delta^j\in U_j$ can be found such that $x^j+\delta^j$ is misclassified; see Figure \ref{fig:exampleMRO-SVM}.

Motivated by the idea of min-max-min robustness (\cite{buchheim2017min,buchheim2018complexity,kurtz2021new}) the objective of the following ensemble model is to find $k$ hyperplanes \[H_1:=\left\{ x\in\R^n: \ w_1^\top x + b_1=0\right\},\ldots , H_k:=\left\{ x\in\R^n: \ w_k^\top x + b_k=0\right\}\] to hedge against all possible data perturbations where the parameter $k\in \N$ is fixed in advance. As in classical ensemble methods we afterwards classify each data point by \textit{majority vote} of all hyperplanes, i.e. for a given data point $x\in\R^n$ the predicted class is
\begin{equation}\label{eq:majorityvote}
y = \text{sgn}\left( \sum_{i=1}^{k} \text{sgn}(w_i^\top x + b_i)\right) .
\end{equation}
Note that the latter value is $0$ if exactly half of the hyperplanes assign class $1$ and half of it assign class $-1$. By our definition of the sign-function we predict the class $1$ in this case. However we could predict an arbitrary class, e.g. the class which is more often contained in the training set. The advantage of this ensemble approach is that the set of perturbations is now distributed over $k$ hyperplanes instead of a single one. Hence each of the $k$ hyperplanes may misclassify certain perturbed data samples, but this error is often canceled out by a majority of hyperplanes classifying this perturbed point correctly. In Figure \ref{fig:exampleMRO-SVM} we show a two-dimensional example where it is not possible to find a single hyperplane which classifies all perturbed data points correctly, while it is possible if we can choose $k=3$ hyperplanes and perform the majority vote \eqref{eq:majorityvote}.

\begin{figure}[htb]
\begin{minipage}{\linewidth}
	\begin{minipage}[t]{0.45\textwidth}
	\centering
\begin{tikzpicture}[scale=0.5]
\node (1) at (-2,0) [draw,circle,inner sep=1.5pt] {};
\node (2) at (0,2) [draw,circle,inner sep=1.5pt] {};
\node (3) at (0.5,-0.5) [draw,fill,circle,inner sep=1.5pt] {};
\draw (0.5,-0.5) -- (1.5,-0.5);
\node (e) at (1.05,-0.2) {\small $r_j$};
\draw (-3,-1) -- (-3,1) -- (-1,1) -- (-1,-1) -- (-3,-1);
\draw (-1,1) -- (-1,3) -- (1,3) -- (1,1) -- (-1,1);
\draw [thick] (-0.5,-1.5) -- (-0.5,0.5) -- (1.5,0.5) -- (1.5,-1.5) -- (-0.5,-1.5);
\draw [-, thick, red!100] (-3,-2.5) -- (2.5,3);
\end{tikzpicture}

	\end{minipage}
	 \
	\begin{minipage}[t]{0.45\textwidth}
\centering
\begin{tikzpicture}[scale=0.5]
\node (1) at (-2,0) [draw,circle,inner sep=1.5pt] {};
\node (2) at (0,2) [draw,circle,inner sep=1.5pt] {};
\node (3) at (0.5,-0.5) [draw,fill,circle,inner sep=1.5pt] {};
\draw (0.5,-0.5) -- (1.5,-0.5);
\node (e) at (1.05,-0.2) {\small $r_j$};
\draw (-3,-1) -- (-3,1) -- (-1,1) -- (-1,-1) -- (-3,-1);
\draw (-1,1) -- (-1,3) -- (1,3) -- (1,1) -- (-1,1);
\draw [thick] (-0.5,-1.5) -- (-0.5,0.5) -- (1.5,0.5) -- (1.5,-1.5) -- (-0.5,-1.5);
\draw [-, thick, red!100] (-0.65,-2.5) -- (-0.65,3.5);
\draw [-, thick, blue!100] (-4,0.65) -- (2,0.65);
\draw [-, thick, green!100] (-2.5,-2.5) -- (2.5,2.5);

\node (s1) at (-2.2,-0.5) {\scriptsize $2$};
\node (s2) at (-0.1,1.5) {\scriptsize $2$};
\node (s3) at (0.2,-1) {\scriptsize $3$};
\node (s4) at (-0.8,2.5) {\scriptsize $3$};
\node (s5) at (-2.2,0.8) {\scriptsize $3$};
\node (s6) at (-0.2,0.2) {\scriptsize $2$};

\end{tikzpicture}

\end{minipage}
\end{minipage}
\caption{Seperating hyperplane for $\ell_\infty$ perturbations (left) and three separating hyperplanes for $\ell_\infty$ perturbations with majority vote (right). The numbers denote the number of hyperplanes classifying all points in the corresponding region correctly.}
\label{fig:exampleMRO-SVM}
\end{figure}
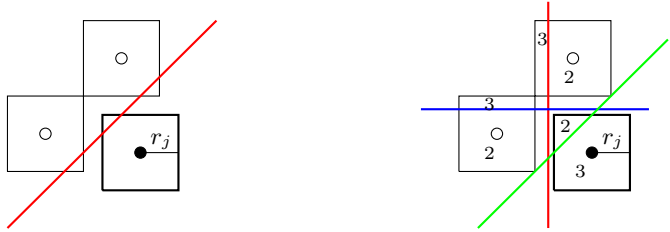

To achieve a robust set of $k$ hyperplanes as intended above, we propose the following algorithm which is similar to a boosting method. The basic idea of the algorithm is to run $k$ iterations and to calculate a new hyperplane in each iteration by solving the classical SVM problem \eqref{eq:svm} for a perturbed training set where each training sample is perturbed by an adversarial perturbation. To this end we denote the adversarial problem which returns and adversarial perturbation for given hyperplanes $H_1,\ldots, H_t$ and data point $x^j$ with label $y^j$ by $Adv(H_1,\ldots ,H_t, y^j, x^j)$. We will consider different adversarial problems later.

To describe the method more precisely, assume we already calculated the first $t$ hyperplanes $H_1,\ldots ,H_t$. Then in the next iteration we solve an adversarial problem $Adv(H_1,\ldots ,H_t, y^j, x^j)$ (to be specified later) for each $j\in [m]$ to obtain a corresponding worst-case perturbation $\delta^j$. Then we perturb each original data point by $\delta^j$, i.e. we calculate $\bar x^j = x^j + \delta^j$, and afterwards we apply the classical SVM to the perturbed labeled dataset $\bar{\mathcal{D}}=\{(\bar x^1,y^1),\ldots ,(\bar x^m,y^m)\}$ to obtain the next hyperplane $H_{t+1}=\{w_{t+1}^\top x + b_{t+1} = 0\}$. Since we want the SVM to focus on data points $\bar x^j$ which are not correctly classified by the already known hyperplanes, we define the following data sample weights which are passed to the SVM solver. For data sample $\bar x^j$ we define the corresponding weight $\lambda_j^t$ in iteration $t$ by $\lambda_j^t:=\frac{1}{1+\gamma_j^t}$ where
\begin{equation}\label{eq:weightsheuristic}
\gamma_j^t:= t + y^j\left( \sum_{i=1}^{t} \text{sgn}\left( w_i^\top \bar x^j + b_i\right)\right) .
\end{equation}
Note that $\gamma_j^t\in \{ 0,\ldots , 2t\}$ where $\gamma_j^t=0$ if all hyperplanes misclassify point $\bar x^j$ and $\gamma_j^t=2t$ if all hyperplanes correctly classify point $\bar x^j$. Particularly the more hyperplanes misclassify the point, the smaller the value $\gamma_j^t$, and the larger is the weight parameter $\lambda_j^t$. Hence using these weights the SVM focuses more on data points which are misclassified by a large number of hyperplanes. Note that in the first iteration we can calculate an arbitrary hyperplane, e.g. the classical robust SVM solution by solving Problem \eqref{eq:ro-svm}. The latter procedure is presented in Algorithm~\ref{alg:heuristic}. 

Note that for a data point $x^j$ we only need to consider the hyperplanes for which an adversarial perturbation $\delta\in U_j$ exists which leads to a misclassification of point $x^j+\delta$. We call such a hyperplane \textit{foolable}. A hyperplane $H_i$ is foolable if and only if 
\begin{equation}\label{eq:check_foolable}
\max_{\delta\in U_j} -y^j(w_i^\top (x^j+\delta)+b_i)>0
\end{equation}
since the latter condition says that we can find a perturbation $\delta\in U_j$ such that $y^j(w_i^\top (x^j+\delta)+b_i)<0$ and hence the point $x^j+\delta$ is misclassified by hyperplane $H_i$. Note that by definition of the dual norm it holds
\[
\max_{\delta\in U_j} -y^j(w_i^\top (x^j+\delta)+b_i) = -y^j(w_i^\top x^j+b_i) + r_j\|w_i\|^*
\]
and therefore we can solve Problem \eqref{eq:check_foolable} efficiently if we can calculate the dual norm efficiently. In the first step of the inner loop in Algorithm \ref{alg:heuristic} we calculate all foolable hyperplanes. Note that this step can be made more efficient by memorizing for all data points the hyperplanes which can be fooled. Then in the first step of the inner loop we only have to check if the last hyperplane can be fooled. 

\begin{algorithm}\caption{(Robust Ensemble Method)}\label{alg:heuristic}
\begin{algorithmic}
\State {\bfseries Input:} $n$,$m$, $\mathcal D$, $U_1,\ldots ,U_m$, $k$
	\State {\bfseries Output:} Hyperplane parameters $(w_1,b_1), \ldots, (w_k,b_k)$.
	\State Calculate an optimal solution $(w_1^*,b_1^*)$ of the classical robust SVM-Problem \eqref{eq:ro-svm}.
	
	\For{$t=2,\ldots ,k$ } 
	\For{$j=1,\ldots ,m$}
	\State Calculate all foolable hyperplanes $H_{i_1},\ldots ,H_{i_p}\in \{H_1,\ldots , H_t\}$ by
	\par\hskip\algorithmicindent solving \eqref{eq:check_foolable} for all $i=1,\ldots ,t$.\par
	\State Calculate an adversarial perturbation $\delta^j$ by solving problem \[Adv(H_{i_1},\ldots ,H_{i_p}, y^j, x^j)\]
	\State Set $\bar x^j:=x^j + \delta^j$.
	\State Calculate $\gamma_j^t$ as in \eqref{eq:weightsheuristic} for each $j\in [m]$.
	\State Set $\lambda_j^t:=\frac{1}{1+\gamma_j^t}$ for each $j\in [m]$.
	\State Calculate an optimal solution $(w_t^*,b_t^*)$ of \eqref{eq:svm} with sample weights 
	\par\hskip\algorithmicindent$c_j=\lambda_j^t$ and training set $\bar{\mathcal{D}}=\{(\bar x^1,y^1),\ldots ,(\bar x^m,y^m)\}$.\par
	\EndFor
	\EndFor
	\State \textbf{Return:} $(w_1^*,b_1^*), \ldots, (w_k^*,b_k^*)$
\end{algorithmic}
\end{algorithm}

\subsection{The Adversarial Problem}
One of the main components of Algorithm \ref{alg:heuristic} is the calculation of the adversarial perturbations. Clearly the best choice for the adversarial is to find a perturbation $\delta^j\in U_j$ such that the perturbed point $x^j+\delta^j$ is misclassified by as many hyperplanes as possible. We say that the adversary \textit{fools hyperplane $H_t$} with respect to data point $x^j$ if a perturbation $\delta^j\in U_j$ is returned with
\[
y^j(w_t^\top (x^j+\delta^j) + b_t)< 0, 
\]
i.e. if the perturbed point is misclassified by hyperplane $H_t$. We say that the adversary \textit{fools the ensemble model} with respect to data point $x^j$ if a perturbation $\delta^j\in U_j$ is returned such that at least $\lceil\frac{k+1}{2}\rceil$ hyperplanes are fooled with respect to $x^j$. Note that in the latter case more than half of the hyperplanes misclassify the perturbed point $x^j+\delta^j$ and hence the ensemble model misclassifies $x^j$.

We define the \textit{Exact Adversarial Problem} as follows: for given Hyperplanes $H_1,\ldots , H_t$ and a data point $x^j$ with label $y^j$, find a perturbation $\delta^j\in U_j$ such that the number of fooled hyperplanes is maximized. In the following lemma we show that this problem can be modeled as a mixed-integer program.

\begin{lemma}\label{lem:exactadversarialproblem}
Given hyperplanes $H_1=\{ x\in \R^n: w_1^\top x + b_1\} , \ldots , H_k=\{ x\in \R^n: w_k^\top x + b_k\}$, then the Exact Adversarial Problem for data point $x^j$ can be solved by solving
\begin{equation}\label{eq:adversarialProblem}
\begin{aligned}
\min \ & \sum_{i=1}^{k} z_i \\
s.t. \quad & y^j((w_i)^\top (x^j+\delta) + b_i) \le M z_i \quad \forall \ i\in [k] \\
& \|\delta\|\le r_j\\
& \delta\in \R^n, z\in \{0,1\}^k .
\end{aligned}
\end{equation}
where $M:=\|w_i\|_2(\|x^j\|_2+R)+|b_i|$ and $R=\max_{\delta\in U_j}\|\delta\|_2$. Furthermore the optimal value equals the number of hyperplanes which are not fooled.
\end{lemma}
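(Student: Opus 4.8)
The plan is to read each binary variable $z_i$ as an indicator that hyperplane $H_i$ is \emph{not} fooled by the chosen perturbation, and then to establish two facts: that the big-$M$ constant is large enough that fixing $z_i=1$ never restricts the choice of $\delta$, and that for any fixed feasible $\delta$ the minimization over $z$ forces $z_i=0$ precisely on the fooled hyperplanes. Together these give a clean correspondence between feasible points of \eqref{eq:adversarialProblem} and admissible perturbations, from which the claimed optimal value follows. Observe first that the constraint $\|\delta\|\le r_j$ is exactly $\delta\in U_j$, so the feasible perturbations of the program are precisely the admissible ones.

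For the big-$M$ validity I would read the constant as index-dependent, $M_i:=\|w_i\|_2(\|x^j\|_2+R)+|b_i|$, one per constraint. The goal is to show that whenever $z_i=1$ the $i$-th constraint is redundant, i.e. $y^j((w_i)^\top(x^j+\delta)+b_i)\le M_i$ for every $\delta\in U_j$. Since $y^j\in\{-1,1\}$ we have $y^j((w_i)^\top(x^j+\delta)+b_i)\le |(w_i)^\top(x^j+\delta)+b_i|$, and a single application of Cauchy--Schwarz with the triangle inequality gives $|(w_i)^\top(x^j+\delta)|\le \|w_i\|_2(\|x^j\|_2+\|\delta\|_2)\le \|w_i\|_2(\|x^j\|_2+R)$, using $\|\delta\|_2\le R=\max_{\delta\in U_j}\|\delta\|_2$. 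Adding $|b_i|$ yields $M_i$, so the $z_i=1$ branch imposes no effective restriction on $\delta$.

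Next, fix any feasible $\delta\in U_j$ and minimize $\sum_i z_i$ over the integral variables alone. The $i$-th constraint reads $y^j((w_i)^\top(x^j+\delta)+b_i)\le M_i z_i$. If $y^j((w_i)^\top(x^j+\delta)+b_i)\le 0$ then $z_i=0$ is feasible and hence optimal; otherwise the left-hand side is positive while $M_iz_i=0$ when $z_i=0$, so we are forced to take $z_i=1$ (feasible by the big-$M$ bound). Thus for this $\delta$ the optimal partial objective equals the number of indices with $y^j((w_i)^\top(x^j+\delta)+b_i)>0$, i.e. the number of hyperplanes that are \emph{not} fooled. Minimizing over $\delta\in U_j$ therefore minimizes the number of non-fooled hyperplanes, equivalently maximizes the number fooled, and the optimal value of \eqref{eq:adversarialProblem} equals the number of hyperplanes not fooled by the maximizing perturbation.

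The step I expect to require the most care is the boundary case $y^j((w_i)^\top(x^j+\delta)+b_i)=0$: the model permits $z_i=0$ there, whereas fooling was defined by the strict inequality $y^j(\cdot)<0$. Because of the nonstandard convention $\mathrm{sgn}(0)=1$, a point lying exactly on a hyperplane receives label $1$, so the strict-versus-nonstrict discrepancy affects only a measure-zero set of perturbations and does not change the optimal count for generic data. I would state this explicitly and, if one wants a literal equality, resolve it by counting a hyperplane through $x^j+\delta$ as fooled precisely when $y^j=-1$, which is exactly consistent with $\mathrm{sgn}(0)=1$. This is the only genuinely delicate point; the remainder is the routine big-$M$ and indicator bookkeeping above.
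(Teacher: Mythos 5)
Your proof is correct and follows essentially the same route as the paper's: interpret $z_i$ as an indicator that hyperplane $H_i$ is not fooled, verify the big-$M$ bound via Cauchy--Schwarz and the triangle inequality so that $z_i=1$ never restricts $\delta$, and observe that minimizing $\sum_i z_i$ forces $z_i=0$ exactly when $y^j((w_i)^\top(x^j+\delta)+b_i)\le 0$. You are in fact slightly more careful than the paper on the boundary case $y^j((w_i)^\top(x^j+\delta)+b_i)=0$, which the paper's proof silently counts as misclassified even though, under the convention $\mathrm{sgn}(0)=1$, a point with $y^j=1$ lying exactly on the hyperplane is correctly classified; your explicit resolution of this strict-versus-nonstrict discrepancy is a minor but genuine refinement.
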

\begin{proof}
First note that the constraint $\|\delta\|\le r_j$ ensures that $\delta\in U_j$. Now fix any hyperplane $i\in [k]$ and denote by $(\delta^*,z^*)$ an optimal solution of Problem \eqref{eq:adversarialProblem}. Clearly if $y^j((w_i)^\top (x^j+\delta^*) + b_i)> 0$ then $z_i=1$ must hold for each feasible solution and hence for the optimal solution. Furthermore in this case each $\delta\in U_j$ is feasible since by applying the Cauchy-Schwarz inequality and the triangle inequality we obtain 
\[
|y^j((w_i)^\top (x^j+\delta) + b_i)| \le \|w_i\|_2(\|x^j\|_2 + \| \delta\|_2) + |b_i| \le M.
\]
On the contrary, since we minimize the sum of all $z$-variables, if possible we want to set $z_i=0$. Hence if $y^j((w_i)^\top (x^j+\delta) + b_i)\le 0$ then $z_i^*=0$ must hold in the optimal solution. We can deduce that data point $x^j+\delta^*$ is misclassified by hyperplane $i$ if and only if $z_i^*=0$. Since we minimize the sum over all $z_i$, the optimal $\delta^*$ is the perturbation which maximizes the number of hyperplanes which are fooled. The last result follows from the argumentation above. 
\end{proof}
Note that since the optimal value $v^*$ of Problem \eqref{eq:adversarialProblem} is equal to the number of non-fooled hyperplanes, we can deduce that if $v^*\le \lfloor\frac{k-1}{2}\rfloor$, then the ensemble model is fooled by the adversary with respect to $x^j$ while if $v^*\ge \lceil\frac{k+1}{2}\rceil$ then no $\delta\in U_j$ exists which fools the ensemble model. If $k$ is an even number and $v^*=\frac{k}{2}$, i.e. exactly half of the hyperplanes are fooled, then it depends on the label we assign in this case to $x^j$ if the ensemble model is fooled or not.

Note that in theory if $k$ is a small value, we can solve the Exact Adversarial Problem by considering all $2^k$ possible subsets of hyperplane indices $S\subset [k]$ and for each consider the problem where we assume that all hyperplanes in $S$ classify data point $x^j$ correctly and all other hyperplanes misclassify $x^j$. Hence for each $S$ we only have to check if a feasible adversarial example exists i.e. we have to solve the linear continuous problem
\begin{equation*}
\begin{aligned}
\min \ & 0 \\
s.t. \quad & y^j((w_i)^\top (x^j+\delta) + b_i) > 0 \quad \forall \ i\in S \\
& y^j((w_i)^\top (x^j+\delta) + b_i) \le 0 \quad \forall \ i\in [k]\setminus S \\
& \delta\in U_j.
\end{aligned}
\end{equation*}
In practical computations the first set of strict inequalities can be replaces by inequalities $ y^j((w_i)^\top (x^j+\delta) + b_i) \ge \varepsilon$ where $\varepsilon$ is a small enough value. However this approach is more of theoretical interest since the number of problems we have to solve is exponential in $k$. 

Lemma \ref{lem:exactadversarialproblem} shows that the Exact Adversarial Problem can be modeled as a mixed-integer program with $k$ binary variables and $n$ continuous variables. The structure of the problem depends on the uncertainty set $U_j$. If we choose the $\ell_2$-norm to define the set, then we obtain a quadratic mixed-integer problem, while for the $\ell_1$ or the $\ell_\infty$-norm the problem can be reformulated as a linear mixed-integer problem by adding additional variables to model the absolute values appearing in the norm constraint. If $k$ is not too large all these problems can be solved by classical state-of-the-art solvers. Unfortunately Problem \eqref{eq:adversarialProblem} involves big-M constraints which are known to be computationally challenging. Since in Algorithm \ref{alg:heuristic} we have to solve $m$ adversarial problems in each iteration this can still lead to large computation times on realistic datasets. Hence a fast heuristic for finding possibly non-optimal adversarial perturbations is desired. We present such an heuristic in Section \ref{sec:heuristicadversarial}. Furthermore we consider a relaxed version of the Exact Adversarial Problem in Section \ref{sec:relaxedadversarial} where instead of the number of fooled hyperplanes the average hinge-loss is maximized.

\subsection{Relaxed Adversarial Problem}\label{sec:relaxedadversarial}
In this section we consider a relaxed version of the Exact Adversarial Problem. The idea is instead of maximizing the number of fooled hyperplanes, to maximize the average hinge-loss over all hyperplanes. We define this problem as follows: Given hyperplanes $H_1=\{ x\in \R^n: w_1^\top x + b_1\} , \ldots , H_k=\{ x\in \R^n: w_k^\top x + b_k\}$ the \textit{Relaxed Adversarial Problem} for data point $x^j$ is defined as
\begin{equation}\label{eq:adversarialProblemRelaxed}
\max_{\delta\in U_j} \ \sum_{i=1}^{k} [1- y^j((w_i)^\top (x^j+\delta) + b_i)]_+.
\end{equation}
The idea is that for a fooled hyperplane the hinge loss $[1- y^j((w_i)^\top (x^j+\delta) + b_i)]_+$ is larger than for a non-fooled hyperplane. Hence maximizing the average hinge-loss can lead to useful adversarial perturbations which can be used in Algorithm \ref{alg:heuristic}. Unfortunately since this problem involves maximizing a convex function, calculating an optimal solution is very challenging. In the following we will apply the results from \cite{selvi2020convex} to obtain useful reformulations of Problem \eqref{eq:adversarialProblemRelaxed} which can be solved by state-of-the-art integer programming solvers like CPLEX or Gurobi.

We consider the adversarial problem \eqref{eq:adversarialProblemRelaxed} for a fixed $j\in [m]$ and assume that we have given hyperplane parameters $w_1,\ldots ,w_k\in\R^n$, $b_1,\ldots ,b_k\in \R$. In the following we define the function $f:\R^k\to \R_+$ with \[f(t_1,\ldots ,t_k)=\sum_{i=1}^{k}[t_i]_+,\] the matrix $A\in \R^{k\times n}$ where the $i$-th row is given by $A_i=-y_j w_i^\top$ and the vector $c\in\R^k$ where $c_i=1-y_j(w_i^\top x^j +b_i)$. Then we can reformulate Problem \eqref{eq:adversarialProblemRelaxed} by
\begin{equation}\label{eq:adversarial_denhertog_variant}
\max_{\|\delta\|\le r_j} f(A\delta + c).
\end{equation}
The convex conjugate function $f^*$ of $f$ is given by
\[
f^*(v):=\sup_{t\in\R^k}v^\top t-f(t) = \begin{cases}
0 & v\in [0,1]^k \\
\infty & \text{else}
\end{cases}
\]
and hence the domain of $f^*$ is given by $[0,1]^k$. We can now apply the results in \cite{selvi2020convex} and obtain the following general result.
\begin{theorem}\label{thm:denHertogResult}
If $U_j=\left\{ \delta\in \R^n \ | \ \|\delta\|_p \le r_j\right\}$, then the optimal value of the Relaxed Adversarial Problem \eqref{eq:adversarialProblemRelaxed} is equal to the optimal value of
\begin{equation}\label{eq:adversarial_reformulation_p}
\max_{v\in [0,1]^k} r_j\|A^\top v\|_q + \sum_{i=1}^{k}\left(1-y_j(w_i^\top x^j + b_i)\right) v_i
\end{equation}
where $\frac{1}{q} + \frac{1}{p}=1$ (respectively $q=1$ if $p=\infty$ and vice versa) and an optimal solution is given by $\delta^*$ with 
\[\delta^* \in  \argmax_{\delta\in U_j} (A^\top v^*)^\top \delta
\]
where $v^*$ is an optimal solution of \eqref{eq:adversarial_reformulation_p}.
\end{theorem}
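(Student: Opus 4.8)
The plan is to recognize \eqref{eq:adversarial_denhertog_variant} as an instance of the convex-maximization-over-a-norm-ball framework of \cite{selvi2020convex} and to make the abstract duality explicit through the Fenchel biconjugate of $f$. The crucial structural fact is that $f(t)=\sum_{i=1}^k [t_i]_+$ is a closed proper convex function, hence $f=f^{**}$, so $f$ admits the support-type representation $f(s)=\sup_{v}\{v^\top s - f^*(v)\}$. Since the excerpt already computes $f^*$ and identifies its domain as $[0,1]^k$ with $f^*\equiv 0$ there, this immediately gives, for every fixed $\delta$,
\[
f(A\delta+c)=\max_{v\in[0,1]^k} v^\top(A\delta+c),
\]
the supremum being attained because $[0,1]^k$ is compact and the objective is continuous in $v$.

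Next I would substitute this into \eqref{eq:adversarial_denhertog_variant} and interchange the two maximizations. Because both the outer problem over $\delta\in U_j$ and the inner problem over $v\in[0,1]^k$ are \emph{maximizations}, the interchange is unconditional — it is merely the statement that a supremum over a product set is independent of the order in which the two coordinates are optimized — so no minimax or saddle-point hypothesis is required here. This yields
\[
\max_{\|\delta\|_p\le r_j} f(A\delta+c)=\max_{v\in[0,1]^k}\Big\{ v^\top c+\max_{\|\delta\|_p\le r_j} (A^\top v)^\top\delta\Big\}.
\]
The inner linear maximization is exactly the support function of the $\ell_p$-ball of radius $r_j$ evaluated at $A^\top v$, which by the definition of the dual norm equals $r_j\|A^\top v\|^*$; invoking that the dual norm of $\ell_p$ is $\ell_q$ with $\tfrac1p+\tfrac1q=1$ turns this into $r_j\|A^\top v\|_q$. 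Substituting $c_i=1-y_j(w_i^\top x^j+b_i)$ into $v^\top c$ then produces precisely the objective of \eqref{eq:adversarial_reformulation_p}, establishing equality of the optimal values.

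It remains to justify the recovery of a primal-optimal perturbation $\delta^*$, and this is the step I would treat most carefully. Let $v^*$ be optimal for \eqref{eq:adversarial_reformulation_p} and pick $\delta^*\in\argmax_{\delta\in U_j}(A^\top v^*)^\top\delta$, which is nonempty by compactness of $U_j$. Using the biconjugate representation once more, $f(A\delta^*+c)=\max_{v\in[0,1]^k} v^\top(A\delta^*+c)\ge (v^*)^\top(A\delta^*+c)$, and by the choice of $\delta^*$ the right-hand side equals $r_j\|A^\top v^*\|_q+(v^*)^\top c$, which is the common optimal value computed above. Since trivially $f(A\delta^*+c)\le \max_{\delta\in U_j} f(A\delta+c)$, both inequalities are tight and $\delta^*$ attains the primal optimum of \eqref{eq:adversarialProblemRelaxed}. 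The main obstacle is therefore not any deep estimate but the bookkeeping needed to guarantee that the conjugate/biconjugate identities apply verbatim (closedness of $f$ and attainment on the compact sets $U_j$ and $[0,1]^k$) and that the recovered $\delta^*$ — obtained from the dual optimizer rather than directly from the primal — is certified optimal by this tight chain of inequalities; the specialization from a general norm to $\ell_p$ enters only through the single dual-norm evaluation.
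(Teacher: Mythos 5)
Your proof is correct and is essentially the paper's own approach made self-contained: the paper establishes this theorem by computing $f^*$ (with domain $[0,1]^k$) and then invoking the conjugate-duality result of \cite{selvi2020convex}, and your argument — biconjugation $f=f^{**}$, the unconditional max--max interchange, and the dual-norm evaluation of the support function of the $\ell_p$-ball — is exactly what that cited result does, with the recovery of $\delta^*$ certified by the same tight chain of inequalities.
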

Note that deriving the optimal solution $\delta^*$ results in optimizing a linear function over the set $U_j$ which can be done efficiently for the $\ell_2$, $\ell_1$ and $\ell_\infty$-norm. However calculating the optimal solution $v^*$ of Problem \eqref{eq:adversarial_reformulation_p} is the main challenge. In the following we derive mixed-integer programming reformulations of the adversarial problem for the $\ell_\infty$ and the $\ell_1$ norm.
\begin{corollary}\label{cor:inf_norm}
If $U_j=\left\{ \delta\in \R^n \ | \ \|\delta\|_\infty \le r_j\right\}$, then the optimal value of the Relaxed Adversarial Problem \eqref{eq:adversarialProblemRelaxed} is equal to the optimal value of
\begin{align}
\max \ & r_j\sum_{l=1}^{n}\nu_l + \sum_{i=1}^{k}\left(1-y_j(w_i^\top x^j + b_i)\right) v_i \label{eq:MILPRelaxedAdversarialInfNorm}\\
s.t. \quad & \nu_l=\left(-1+2\tau_l\right)\left(\sum_{i=1}^{k} (w_i)_l v_i\right) \quad \forall \ l\in [n] \label{constr:abs} \\
& \sum_{i=1}^{k} (w_i)_lv_i \le M_l\tau_l \quad \forall \ l\in [n]\label{constr:bigM1}\\
& \sum_{i=1}^{k} (w_i)_lv_i \ge -M_l(1-\tau_l) \quad \forall \ l\in [n]\label{constr:bigM2}\\
& v\in [0,1]^k, \nu\in \R_+^n, \tau\in \{ 0,1\}^n .
\end{align}
where $M_l=\sum_{i\in [k]} |(w_i)_l|$ for each $l\in [n]$. An optimal solution of Problem \eqref{eq:adversarialProblemRelaxed} is given by $\delta^*$ with 
\[\delta_l^*= \begin{cases}r_j & \text{ if } \sum_{i=1}^{k}-y_j(w_i)_lv_i^*\ge 0 \\ -r_j & \text{ else,} \end{cases}
\]
where $v^*$ is an optimal solution of \eqref{eq:MILPRelaxedAdversarialInfNorm}.
\end{corollary}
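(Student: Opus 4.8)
The plan is to specialize Theorem \ref{thm:denHertogResult} to the case $p=\infty$ and then to linearize the resulting $\ell_1$-norm term in the objective. Since the dual exponent of $p=\infty$ is $q=1$, Theorem \ref{thm:denHertogResult} already tells us that the optimal value of \eqref{eq:adversarialProblemRelaxed} equals the optimal value of $\max_{v\in[0,1]^k} r_j\|A^\top v\|_1 + \sum_{i=1}^{k}(1-y_j(w_i^\top x^j+b_i))v_i$. The first step is therefore to write $A^\top v$ coordinatewise. Because the $i$-th row of $A$ is $-y_j w_i^\top$, the $l$-th entry of $A^\top v$ equals $-y_j\sum_{i=1}^{k}(w_i)_l v_i$, and since $|y_j|=1$ we obtain $\|A^\top v\|_1=\sum_{l=1}^{n}\bigl|\sum_{i=1}^{k}(w_i)_l v_i\bigr|$. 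Substituting this back turns the objective of Theorem \ref{thm:denHertogResult} into exactly $r_j\sum_{l=1}^{n}\bigl|\sum_{i=1}^{k}(w_i)_l v_i\bigr| + \sum_{i=1}^{k}(1-y_j(w_i^\top x^j+b_i))v_i$, which is the objective \eqref{eq:MILPRelaxedAdversarialInfNorm} once each absolute value is replaced by the auxiliary variable $\nu_l$.

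The core of the argument is to verify that constraints \eqref{constr:abs}--\eqref{constr:bigM2} force $\nu_l=\bigl|\sum_{i=1}^{k}(w_i)_l v_i\bigr|$ for every feasible point. Writing $s_l:=\sum_{i=1}^{k}(w_i)_l v_i$, I would check the two cases of the binary $\tau_l$: if $\tau_l=1$ then \eqref{constr:bigM2} reads $s_l\ge 0$ and \eqref{constr:abs} gives $\nu_l=s_l$, so $\nu_l=|s_l|$; if $\tau_l=0$ then \eqref{constr:bigM1} reads $s_l\le 0$ and \eqref{constr:abs} gives $\nu_l=-s_l$, so again $\nu_l=|s_l|$. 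Conversely, for any $v\in[0,1]^k$ one may pick $\tau_l$ according to $\text{sgn}(s_l)$ and obtain a feasible triple with $\nu_l=|s_l|$; the remaining big-M inequality is then slack precisely because $|s_l|\le\sum_{i=1}^{k}|(w_i)_l|\,|v_i|\le\sum_{i=1}^{k}|(w_i)_l|=M_l$, which is exactly why the choice $M_l=\sum_{i\in[k]}|(w_i)_l|$ is valid. Hence the attainable values of $\sum_l\nu_l$ coincide with $\sum_l|s_l|$, so the optimal value of \eqref{eq:MILPRelaxedAdversarialInfNorm} matches the value from Theorem \ref{thm:denHertogResult}, which in turn equals the optimal value of \eqref{eq:adversarialProblemRelaxed}.

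I expect the main subtlety to be the direction of the optimization rather than any hard estimate. Because the problem \emph{maximizes} $\sum_l\nu_l$ with $r_j\ge 0$, the textbook encoding of an absolute value by the two inequalities $\nu_l\ge s_l$ and $\nu_l\ge -s_l$ is useless here, since $\nu_l$ could be driven to infinity; this is exactly why the formulation instead \emph{pins} $\nu_l$ to $\pm s_l$ through the equality \eqref{constr:abs} and lets the sign indicator $\tau_l$ select the correct branch through the big-M constraints. Care is also needed at $s_l=0$, where both choices of $\tau_l$ are admissible and both yield $\nu_l=0$, so no inconsistency arises.

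Finally, to recover $\delta^*$ I would invoke the second statement of Theorem \ref{thm:denHertogResult}, namely that an optimizer is any $\delta^*\in\argmax_{\delta\in U_j}(A^\top v^*)^\top\delta$. For the $\ell_\infty$-ball $U_j$ this linear maximization separates over coordinates, so $\delta_l^*=r_j\,\text{sgn}\bigl((A^\top v^*)_l\bigr)$. Using $(A^\top v^*)_l=\sum_{i=1}^{k}-y_j(w_i)_l v_i^*$ together with the paper's convention $\text{sgn}(0)=1$, this is exactly the case distinction stated in the corollary, which completes the proof.
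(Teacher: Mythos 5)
Your proposal is correct and follows essentially the same route as the paper's proof: specialize Theorem \ref{thm:denHertogResult} to $p=\infty$, show that constraints \eqref{constr:abs}--\eqref{constr:bigM2} pin $\nu_l$ to $\bigl|\sum_{i=1}^{k}(w_i)_l v_i\bigr|$ via a case analysis on the big-M/indicator structure, and read off $\delta^*$ from the coordinatewise linear maximization over the $\ell_\infty$-ball. Your organization of the case analysis by the value of $\tau_l$ (rather than by the sign of the sum, as the paper does) is a minor presentational difference, and in fact handles the boundary case $s_l=0$ slightly more cleanly, but it is the same argument.
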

\begin{proof}
Applying the results of Theorem \ref{thm:denHertogResult} with $\ell_\infty$-norm we obtain the Problem 
\begin{equation}\label{eq:adversarial_reformulation_inf}
\max_{v\in [0,1]^k} r_j\sum_{l=1}^{n} |\sum_{i=1}^{k}(w_i)_lv_i| + \sum_{i=1}^{k}\left(1-y_j(w_i^\top x^j + b_i)\right) v_i .
\end{equation}
We use variables $\nu_l$ to model the absolute value $|\sum_{i=1}^{k}(w_i)_lv_i|$. To this end note that if $\sum_{i=1}^{k}(w_i)_lv_i > 0$, then $\tau_l=1$ must hold because of Constraint \eqref{constr:bigM1} and Constraint \eqref{constr:bigM2} is not violated by the choice of $M_l$. In this case Constraint \eqref{constr:abs} ensures that $v_l=\sum_{i=1}^{k}(w_i)_lv_i$. On the other hand if $\sum_{i=1}^{k}(w_i)_lv_i < 0$, then $\tau_l=0$ must hold because of Constraint \eqref{constr:bigM2} and Constraint \eqref{constr:bigM1} is not violated by the choice of $M_l$. In this case Constraint \eqref{constr:abs} ensures that $v_l=-\sum_{i=1}^{k}(w_i)_lv_i$. By Theorem \ref{thm:denHertogResult} an optimal solution $\delta^*$ is then given by any
$\delta^* \in \argmax_{\|\delta\|_\infty\le r_j} \left(A^\top v^*\right)^\top \delta$. It is easy to see that the defined $\delta^*$ is an optimal solution of the latter problem.
\end{proof}
Next we derive a similar result as in the latter corollary for the $\ell_1$-norm case.
\begin{corollary}\label{cor:1_norm}
If $U_j=\left\{ \delta\in \R^n \ | \ \|\delta\|_1 \le r_j\right\}$, then the optimal value of the Adversarial Problem \eqref{eq:adversarialProblemRelaxed} is equal to the optimal value of
\begin{align}
\max \ & r_j\sum_{l=1}^{n}\mu_l\nu_l + \sum_{i=1}^{k}\left(1-y_j(w_i^\top x^j + b_i)\right) v_i \label{eq:MILPRelaxedAdversarial1Norm}\\
s.t. \quad & \sum_{l=1}^{n}\mu_l=1 \label{constr:u} \\
& \nu_l + \sum_{i=1}^{k} (w_i)_lv_i \le M_l\tau_l \quad \forall \ l\in [n]\label{constr:bigM3}\\
& \nu_l - \sum_{i=1}^{k} (w_i)_lv_i \le M_l(1-\tau_l) \quad \forall \ l\in [n]\label{constr:bigM4}\\
& v\in [0,1]^k, \mu\in \{0,1\}^n, \tau\in \{ 0,1\}^n
\end{align}
where $M_l=2\sum_{i\in [k]} |(w_i)_l|$ for each $l\in [n]$. An optimal solution is given by $\delta^*$ with $\delta_{l^*}^*=\text{sgn}(\sum_{i=1}^{k}-y_j(w_i)_{l^*}v_i^*)r_j$ for exactly one $l^*\in\argmax_{l\in [n]}|\sum_{i=1}^{k}(w_i)_{l}v_i^*|$ and $\delta_l=0$ otherwise, where $v^*$ is an optimal solution of \eqref{eq:MILPRelaxedAdversarial1Norm}.
\end{corollary}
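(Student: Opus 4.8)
The plan is to follow the same two-step strategy as in the proof of Corollary~\ref{cor:inf_norm}: first invoke Theorem~\ref{thm:denHertogResult} to reduce the Relaxed Adversarial Problem \eqref{eq:adversarialProblemRelaxed} to a finite-dimensional maximization over $v\in[0,1]^k$, and then linearize the resulting norm term with auxiliary binary variables. Specializing Theorem~\ref{thm:denHertogResult} to $p=1$ gives $q=\infty$, so the optimal value of \eqref{eq:adversarialProblemRelaxed} equals that of
\[
\max_{v\in[0,1]^k} r_j\|A^\top v\|_\infty + \sum_{i=1}^k\left(1-y_j(w_i^\top x^j+b_i)\right)v_i.
\]
Using $A_i=-y_jw_i^\top$ and $|y_j|=1$, the $l$-th coordinate of $A^\top v$ equals $-y_j\sum_{i=1}^k(w_i)_lv_i$, so $\|A^\top v\|_\infty=\max_{l\in[n]}\left|\sum_{i=1}^k(w_i)_lv_i\right|$. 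This is the crucial structural difference from the $\ell_\infty$ case: there the dual exponent is $q=1$ and the norm is a \emph{sum} of absolute values, whereas here it is a \emph{maximum}, which is exactly why the selection variables $\mu\in\{0,1\}^n$ with $\sum_{l}\mu_l=1$ appear.

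Next I would introduce, for each $l\in[n]$, a variable $\nu_l\ge 0$ intended to equal $\left|\sum_{i=1}^k(w_i)_lv_i\right|$ together with a binary $\tau_l$ encoding its sign, and argue that constraints \eqref{constr:bigM3}--\eqref{constr:bigM4} enforce exactly this. Writing $s_l:=\sum_{i=1}^k(w_i)_lv_i$ and using $|s_l|\le\sum_{i}|(w_i)_l|\,v_i\le M_l/2$, I would check the two cases: for $\tau_l=1$ constraint \eqref{constr:bigM4} tightens to $\nu_l\le s_l$ while \eqref{constr:bigM3} stays slack, and for $\tau_l=0$ constraint \eqref{constr:bigM3} tightens to $\nu_l\le -s_l$ while \eqref{constr:bigM4} stays slack; together with $\nu_l\ge 0$ this forces $\tau_l$ to match the sign of $s_l$ and bounds $\nu_l\le|s_l|$, with equality attainable. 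The constraint $\sum_l\mu_l=1$ then selects the single index entering the objective, so that the bilinear term $r_j\sum_l\mu_l\nu_l$ collapses to $r_j\nu_{l^*}$ for the unique $l^*$ with $\mu_{l^*}=1$; since the maximization drives $\nu_{l^*}$ to $|s_{l^*}|$ and is free to place $\mu$ on the largest such coordinate, this term equals $r_j\max_l|s_l|=r_j\|A^\top v\|_\infty$, matching the displayed objective.

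Finally I would recover the optimal perturbation from the last part of Theorem~\ref{thm:denHertogResult}, where $\delta^*\in\argmax_{\|\delta\|_1\le r_j}(A^\top v^*)^\top\delta$. Maximizing a linear functional $c^\top\delta$ with $c=A^\top v^*$ over the $\ell_1$-ball of radius $r_j$ is attained by concentrating the whole budget on a coordinate of largest magnitude $|c_{l^*}|$ with the matching sign, i.e. $\delta^*_{l^*}=\text{sgn}(c_{l^*})\,r_j$ and $\delta^*_l=0$ otherwise. Since $c_l=-y_j s_l$, the index $l^*$ maximizes $|s_l|=\left|\sum_i(w_i)_lv_i^*\right|$ and the sign equals $\text{sgn}\left(\sum_i -y_j(w_i)_{l^*}v_i^*\right)$, which is exactly the stated $\delta^*$; note this is the $\ell_1/\ell_\infty$ dual of the recovery in Corollary~\ref{cor:inf_norm}, where the budget was spread over \emph{every} coordinate rather than a single one.

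The main obstacle I anticipate is the bilinear objective term $r_j\sum_l\mu_l\nu_l$: I must argue carefully that at optimality the product behaves like a clean selection of $\max_l\nu_l$ and that no feasible solution can inflate the objective, e.g. by keeping $\nu_l$ large on a non-selected coordinate or by exploiting a slack big-$M$ constraint. Pinning down that $M_l=2\sum_i|(w_i)_l|$ is simultaneously large enough to leave the inactive constraint non-binding and tight enough that $\tau_l$ is genuinely forced to the sign of $s_l$ is the delicate bookkeeping step, slightly more involved than in the $\ell_\infty$ case because of the extra $\mu$ selection layer.
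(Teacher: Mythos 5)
Your proposal is correct and follows essentially the same route as the paper's proof: specializing Theorem~\ref{thm:denHertogResult} to $q=\infty$, using the case analysis on $\tau_l$ with $|s_l|\le M_l/2$ to show $\nu_l$ can attain $|\sum_i (w_i)_l v_i|$, letting $\mu$ select the maximizing coordinate, and recovering $\delta^*$ by optimizing the linear functional over the $\ell_1$-ball. Your extra care about the bilinear term $\mu_l\nu_l$ and the sign bookkeeping for $\delta^*$ is sound and, if anything, slightly more explicit than the paper's argument.
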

\begin{proof}
First note that due to the variable bounds it holds $\sum_{i=1}^{k} (w_i)_lv_i\le \frac{1}{2} M_l$ for all $l\in [n]$. From Theorem \eqref{thm:denHertogResult} we obtain that Problem \eqref{eq:adversarialProblemRelaxed} is equivalent to
\begin{equation}\label{eq:adversarial_reformulation_1}
\max_{v\in [0,1]^k} r_j\max_{l\in [n]}|\sum_{i=1}^{k}(w_i)_lv_i| + c^\top v .
\end{equation}
We now verify that in an optimal solution of \eqref{eq:MILPRelaxedAdversarial1Norm} it always holds $\nu_l=|\sum_{i=1}^{k}(w_i)_lv_i|$ for all $l\in [n]$ which shows the first part of the result. First note that in the case $\tau_l=0$ we obtain $\nu_l\le -\sum_{i=1}^{k}(w_i)_lv_i$ by Constraint \eqref{constr:bigM3} while Constraint \eqref{constr:bigM4} is not violated by the choice of $M_l$. On the other hand if $\tau_l=1$ we obtain $\nu_l\le \sum_{i=1}^{k}(w_i)_lv_i$ by Constraint \eqref{constr:bigM4} while Constraint \eqref{constr:bigM3} is not violated by the choice of $M_l$. Since we maximize over $\nu_l$ with positive objective coefficients one of the two constraints will be fulfilled with equality in an optimal solution. Clearly we want to choose the option where $\nu_l$ can be larger, hence it always holds $\nu_l=|\sum_{i=1}^{k}(w_i)_lv_i|$.  The variables $\mu_l$ then choose the maximum value over all $\nu_l$ due to Constraint \eqref{constr:u} and the objective term $r_j\sum_{l=1}^{n}u_l\nu_l$, which exactly models Problem \eqref{eq:adversarial_reformulation_1}.

To obtain an optimal solution we have to solve a linear Problem over the $\ell_1$-ball. An optimal solution is always given by choosing the index where the coefficient has the largest absolute value and increasing/decreasing the corresponding solution variable as much as possible. Then all other variables are set to $0$ which is exactly the solution described in the corollary.
\end{proof}

Note that both in Corollary \ref{cor:inf_norm} and \ref{cor:1_norm} the Relaxed Adversarial Problem is modeled as a mixed integer problem with $\mathcal O(n)$ binary variables. Furthermore as in the Exact Adversarial Problem the Relaxed versions both contain big-M constraints and hence can also be very challenging. Nevertheless since the number of binary variables is linear in $n$, for data sets with a small number of features the relaxed versions can be computationally beneficial. On the other hand if $k$ is a small value the exact version in Section \ref{sec:MRO-SVM} can be better.

\subsection{Heuristic Adversarial Algorithm}\label{sec:heuristicadversarial}
In this section we derive an efficient heuristic to find useful adversarial perturbations, i.e. feasible solutions for Problem \eqref{eq:adversarialProblem}. The main idea behind the method is to calculate the adversarial perturbation as a weighted average of the normal vectors of the given hyperplanes, where the weight for a hyperplane is larger if the distance of the considered data point to the hyperplane is larger. This is motivated by the fact that if a hyperplane is close to the data point, we do not have to go far into this direction to fool this hyperplane. In this case it is more attractive to go into the directions of hyperplanes which are far away. Note that in Algorithm \ref{alg:heuristic} we sort out all hyperplanes which cannot be fooled by an adversarial perturbation, hence the weights for hyperplanes which are too far away are implicitly set to zero.

Given a data point $x^j$ and hyperplanes \[H_1=\{ x\in \R^n: w_1^\top x + b_1\} , \ldots , H_k=\{ x\in \R^n: w_k^\top x + b_k\}\]
we define weights $\beta_1,\ldots \beta_k\ge 0$ where
\begin{equation}\label{eq:betas}
\beta_i:= [1+y^j(w_i^\top x^j + b_i)]_+
\end{equation}
is the weight for hyperplane $H_i$ for each $i\in [k]$. Note that the weights are defined as an adversarial version of the hinge-loss, namely the value $\beta_i$ is large if the data point $x^j$ is on the correct side of the hyperplane and far away. In this case the adversarial is interested in perturbing it to fool the corresponding hyperplane. The closer $x^j$ is to the hyperplane the smaller is the value $\beta_i$ and $\beta_i=1$ if $x^j\in H_i$. For data points which lie already on the wrong side of the hyperplane it holds $\beta_i\in [0,1]$, where $\beta_i>0$ if the point is close to the hyperplane. This is motivated by the fact that the adversarial is not interested in hyperplanes where the point lies already on the wrong side of the hyperplane and is far away from it. If the point is on the wrong side but close to the hyperplane, the normal vector of the hyperplane should get a small weight since perturbing the data point into a completely opposite direction could lead to a correct classification although the hyperplane was already fooled. We normalize the weights $\beta_i$, i.e. we define
\begin{equation}\label{eq:normalized_betas}
\tilde \beta_i:= \frac{\beta_i}{\sum_{i\in [k]} \beta_i}.
\end{equation}

We now define the \textit{heuristic adversarial perturbation} for data point $x^j$ as 
\begin{equation}\label{eq:heuristic_delta}
\delta^j:= r_j\frac{\sum_{i\in [k]} \tilde \beta_i w_i}{\|\sum_{i\in [k]} \tilde \beta_i w_i\|} .
\end{equation}
Note that by the latter normalization it always holds $\| \delta^j\|=r_j$ and therefore $\delta^j\in U_j$. The described heuristic procedure is summarized in Algorithm \ref{alg:heuristic_adversarial}.
\begin{algorithm}\caption{(Heuristic Adversarial Algorithm)}\label{alg:heuristic_adversarial}
\begin{algorithmic}
\State {\bfseries Input:} $n$, $m$, $x^j$, $U_j$, $k$, $H_1,\ldots ,H_k$
	\State {\bfseries Output:} Adversarial perturbation $\delta^j\in U_j$.
	\State Calculate weights $\beta_i$ for each $i\in [k]$ as in \eqref{eq:betas}.
	\State Normalize the weights as in \eqref{eq:normalized_betas}.
	\State Define $\delta^j$ as in \eqref{eq:heuristic_delta}.	
	\State \textbf{Return:} $\delta^j$
\end{algorithmic}
\end{algorithm}

\begin{example}\label{ex:adversarial_heuristic}
Consider the example with two hyperplanes $H_1=\{x\in \R^2: \ -x_1 + x_2 = 0\}$ and $H_2=\{x\in \R^2: \ x_1 + x_2 -2 = 0\}$ and data point $x=(\frac{3}{5},\frac{1}{2})^\top$ with label $y=-1$ (see Figure \ref{fig:exampleAdversarialHeuristic}). Then by the definitions in \eqref{eq:betas} and \eqref{eq:normalized_betas} we obtain $\tilde \beta_1= \frac{11}{30}$ and $\tilde \beta_2=\frac{19}{30}$. Using the definition \eqref{eq:heuristic_delta} we obtain
\[
\delta = \frac{1}{\sqrt{(\frac{4}{15})^2 + 1}}\begin{pmatrix} \frac{4}{15} \\ 1\end{pmatrix}
\] 
and it can easily be verified that $x+\delta$ is classified with label $\hat y=1$ by both hyperplanes.
\end{example}

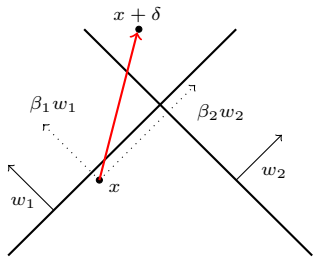
\begin{figure}[htb]
\centering
\begin{tikzpicture}[scale=2]
\draw[thick] (0,0) -- (1.5,1.5);
\draw[thick] (0.5,1.5) -- (2,0);

\node (1) at (0.6,0.5) [draw,fill,circle,inner sep=0.8pt] {};
\node (2) at (0.86,1.5) [draw,fill,circle,inner sep=0.8pt] {};
\node (x) at (0.7,0.45) {\scriptsize $x$};
\node (xd) at (0.85,1.6) {\scriptsize $x+\delta$};
\node (w1) at (0.1,0.35) {\scriptsize $w_1$};
\node (w2) at (1.75,0.55) {\scriptsize $w_2$};
\node (bw1) at (0.3,1) {\scriptsize $\beta_1 w_1$};
\node (bw2) at (1.4,0.95) {\scriptsize $\beta_2 w_2$};
\draw [->] (1.5,0.5) -- (1.8,0.8);
\draw [->] (0.3,0.3) -- (0,0.6);
\draw [->,dotted] (0.6,0.5) -- (0.23,0.87);
\draw [->,dotted] (0.6,0.5) -- (1.23,1.13);
\draw [->, thick, red] (0.6,0.5) -- (0.85,1.48);



\end{tikzpicture}

\caption{Heuristic adversarial perturbation for Example \ref{ex:adversarial_heuristic}.}
\label{fig:exampleAdversarialHeuristic}
\end{figure}

\section{Experiments}\label{sec:computations}
In this section we test the performance of the robust ensemble method given by Algorithm \ref{alg:heuristic} on random and realistic datasets for all adversarial problems derived in Section \ref{sec:MRO-SVM}. All methods are compared to the classical robust SVM and an SVM ensemble method based on bagging. In the following we denote Algorithm \ref{alg:heuristic} with exact adversarial problem \eqref{eq:adversarialProblem} by Ens-E, with relaxed adversarial problem \eqref{eq:adversarialProblemRelaxed} by Ens-R, and with heuristic adversarial perturbation derived by Algorithm \ref{alg:heuristic_adversarial} by Ens-H. The Robust SVM Problem \eqref{eq:ro-svm} is denoted by RO-SVM and the non-robust SVM ensemble method is denoted by SVM-Ens. 

All algorithms were implemented in Python 3.7 using the module scikit-learn 1.0 (\cite{scikit-learn}) and the optimization solver Gurobi 9.1.1 (\cite{gurobi}). All optimization problems were implemented in Gurobi using a timelimit of $7200$ seconds for each Gurobi optimization call. All other parameters are set to their default values. For the classical SVM we use the scikit-learn implementation with linear kernel and for SVM-Ens we use the scikit-learn function BaggingClassifier. The corresponding code is made available online\footnote{https://github.com/JannisKu/EnsembleRobustSVM}.

\subsection{Datasets}
We test all algorithms on the datasets shown in Table \ref{tbl:datasets}. The Breast Cancer Wisconsin dataset (BCW) was taken originally from the UCI Machine Learning Repository \cite{Dua:2019}. 
The digits dataset was loaded by the scikit-learn function load\_digits where each data point is a flattened vector of the original $8\times 8$ pixel matrix which shows handwritten digits from $0$ to $9$. The dataset is converted into a binary classification dataset by assigning label $y=1$ to a pre-defined digit and $y=-1$ to all other digits. We consider the two variants for pre-defined digits $3$ and $7$ denoted by Digits(3) and Digits(7) respectively.

Additionally we generate a random dataset (Gaussian) as follows: we draw a cluster of $100$ random points in dimension $n=5$ from a multivariate gaussian distribution where the mean vector is the all-one vector and the covariance matrix is a diagonal matrix where each entry on the diagonal is $1$. All points are assigned the label $y=1$. Then we draw another cluster of $100$ random points in dimension $n=5$ from a multivariate gaussian distribution where the mean vector is the negative all-one vector and the covariance matrix is the same as above. All the points from the second cluster are assigned the label $y=-1$.  

Table \ref{tbl:datasets} shows the number of data points of each dataset (\# inst.), the number of attributes of each data point (\# attr.), the number of classes (\# class.) and the class distribution (\# class distr.), i.e. the percental fraction of data points for each class. 

\begin{table}[h!]
\begin{center}
\small
\begin{tabular}{l|llll}
Dataset & \# inst. & \# attr. & \# class. & class distr.  \\
\hline
Breast Cancer Wisconsin (BCW) & $699$ & $9$ & $2$ & $65.5\%$/$34.5\%$ \\
Digit Dataset (DD) & $1797$ & $64$ & $10$ & $10\%$/\ldots /$10\%$ \\
Random Gaussian (Gaussian) & $200$ & $5$ & $2$ & $50\%$/$50\%$
\end{tabular}
\end{center}
\caption{Properties of the considered datasets.}
\label{tbl:datasets}
\end{table}

\subsection{Computational Setup}
We test all algorithms on all datasets for several attack and defense levels. To this end we normalize each dataset  by the classical standardize method, i.e. each attribute is transformed by subtracting its mean and  dividing the result by the standard deviation of the original attribute which leads to datasets where each attribute has mean zero and standard deviation one. This choice is motivated by the results in \cite{maggioni2021robustsvm} which show that the uncertainty sets which perform best for RO-SVM approach are the ones where the direction of the axes is given by the standard-deviation-vector of the attributes. We imitate this choice by considering norm-ball uncertainty sets as defined in Section \ref{sec:MRO-SVM} applied to the standardized datasets.

In our tests for a given number of $k=15$ hyperplanes, we consider different defense-levels $r_d\in \{ 0.001, 0.01, 0.05, 0.1, 0.25, 0.5\}$ where the defense-level is the same for each data point, i.e. each data point has the same uncertainty set $U_j:=\left\{ \delta\in \R^n: \ \|\delta\|\le r_d\right\}$. As defense-norm we use the $\ell_2$-norm for all methods except Ens-R and the $\ell_\infty$-norm for all methods except Ens-H. Note that while Ens-R is not applicable for the $\ell_2$-norm, we could also apply Ens-H to the $\ell_\infty$-norm. However for the sake of clarity we omit these calculations. 

For each defense-level we generate $5$ random train-test-splits where the size of the training set is $80\%$ of the original dataset. For each train-test-split we train all methods on the training set where we choose the same number of hyperplanes $k$ for SVM-Ens, Ens-E, Ens-R and Ens-H while we calculate one single hyperplane for RO-SVM. Afterwards we test the performances of all solutions on the test set. To this end we attack each data point in the test set by a worst-case attack vector of length $r_a\in\{ 0.0,0.1,0.2,0.3,0.4,0.5,0.75,1.0,1.25,1.5,1.75,2.0\}$ where the worst-case attack for each data point is calculated by solving the exact adversarial problem \eqref{eq:adversarialProblem} for SVM-Ens, Ens-E, Ens-R and Ens-H, and by solving
\[
\max_{\delta\in U} -y^j((w^*)^\top x^j+b^*)
\]
for RO-SVM, where $w^*,b^*$ is an optimal solution of Problem \eqref{eq:ro-svm}. The attack-norm is always the same as the defense-norm. Note that the worst-case attack depends on the data point $x^j$, its corresponding label $y^j$ and on the calculated solution of the considered model.

For each combination of defense and attack-level $(r_d,r_a)$ we calculate the accuracy of each method on the attacked test set, i.e. the percentage of attacked test-data-points which are classified correctly by each method.  We visualize all results in the following subsections using heatmaps showing the improvement in accuracy compared to SVM-Ens, i.e. we show the difference of the accuracy of the considered robust method and the accuracy of the non-robust SVM ensemble method. Additionally we show a trade-off-curve for SVM-Ens, RO-SVM, Ens-E and Ens-H, where we fix a certain defense-level $r_d$ and show the development of the accuracy over increasing attack-levels. As defense-level for each method we choose the defense-value which has the best average accuracy over all attack-levels. Finally we show the development of the training time in seconds for different protection levels.

Additionally to analyze the behaviour of the ensemble methods for different values of the parameter $k$, we perform another experiment where we fix the defense-level to $r_d=0.1$ and the attack-level to $e_a=1.0$ and instead vary the number of hyperplanes in $k\in \{ 5,10,\ldots ,75\}$ for SVM-Ens, Ens-E and Ens-H. We omit calculations for method Ens-R here since its performance turned out to be not competitive which is mainly due to the fact that we cannot use the $\ell_2$-norm defenses for Ens-R. Furthermore we omit calculations for the Digits dataset due to the increasing computation time. We visualize the results by a line plot showing the average accuracy of each method over $k$. All accuracy and runtime values can be found in Tables \ref{tbl:acc_gaussian_l2} -- \ref{tbl:time_digits7_l2} in the Appendix. 

\subsection{Random Gaussian Dataset}
In this subsection we consider the Random Gaussian Dataset (Gaussian). In Figure \ref{fig:Gaussian_l2} the difference in accuracy of the titled methods and SVM-Ens for $\ell_2$-norm defenses is shown. The results indicate that for all defense levels both methods, RO-SVM and Ens-E, have a similar performance as SVM-Ens for small attacks where Ens-E is sometimes slightly better. For medium sized attacks RO-SVM seems to perform worse than SVM-Ens while Ens-E performs better. For large attacks Ens-E outperforms RO-SVM for all defense-levels except $r_d=0.5$. Ens-H has a worse performance for small attacks while it seems to be the best method for large attacks for all defense-levels. Note that in practice one main difficulty is to choose an appropriate defense-level since comparing the performance in a possible validation step is not well-defined. While some defense levels can have a better performance for small attacks at the same time the performance on large attacks can be much worse than for other defense-levels. Hence having a method which is robust for large attacks on all defense levels is beneficial since choosing the right defense-level is less risky. This is the case here for Ens-E and Ens-H. All accuracy values can be found in Table \ref{tbl:acc_gaussian_l2} in the Appendix showing that Ens-E and Ens-H achieve the best performance for most of the attack-levels.

\begin{figure}[h!]
\centering
\includegraphics[scale=0.28]{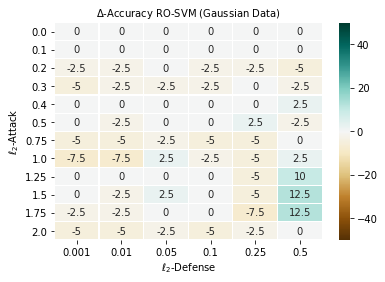}
\includegraphics[scale=0.28]{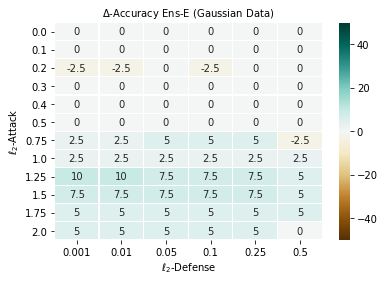}
\includegraphics[scale=0.28]{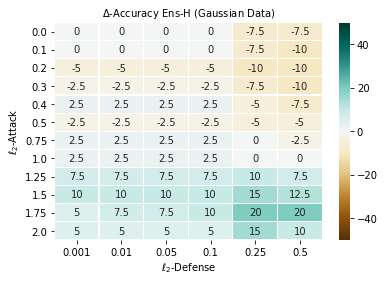}
\caption{Difference between the percentage values of the accuracy of the titled method and the accuracy of SVM-Ens for $\ell_2$-norm defense on the Random Gaussian dataset.}
\label{fig:Gaussian_l2}
\end{figure}

In Figure \ref{fig:Gaussian_linf} the difference in accuracy of the titled methods and the bagging SVM ensemble method (SVM-Ens) for $\ell_\infty$-norm defenses is shown. The results indicate that the performance for this norm is often not better than for the $\ell_2$-norm. While RO-SVM performs well for medium sized attacks it often performs worse for small attacks, except for $r_d=0.5$ defense. The behaviour of Ens-E is not consistent, there are two attack levels ($r_a=0.3$ and $r_a=1.0$) where it outperforms SVM-Ens but has not a real improvement for other attack levels. We find a similar behaviour for Ens-R. All accuracy values can be found in Table \ref{tbl:acc_gaussian_linf} in the Appendix showing that all methods yield the best performance on around a third of the attack-levels. Nevertheless compared to the $\ell_2$-norm the accuracy values are much smaller which is why we omit calculations for the $\ell_\infty$-norm for the subsequent datasets. 

\begin{figure}[h!]
\centering
\includegraphics[scale=0.28]{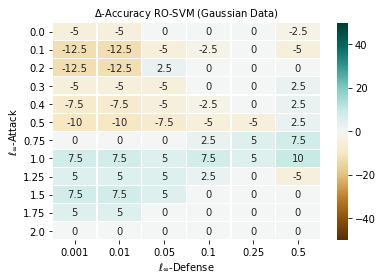}
\includegraphics[scale=0.28]{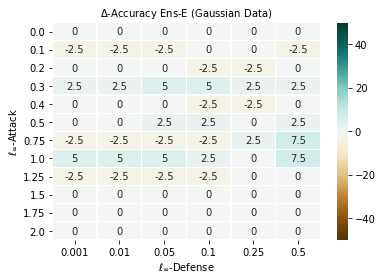}
\includegraphics[scale=0.28]{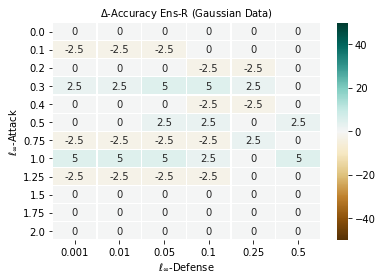}
\caption{Difference between the percentage values of the accuracy of the titled method and the accuracy of SVM-Ens for $\ell_\infty$-norm defense on the Random Gaussian dataset.}
\label{fig:Gaussian_linf}
\end{figure}

In Figure \ref{fig:Gaussian_k} on the left we show the development of the accuracy of the different methods over $k$ for $\ell_2$-norm defense-level $r_d=0.1$ and attack-level $r_a=1.0$. While Ens-E has the best performance for most values of $k$, the performance of Ens-H even decreases for larger $k$. The accuracy of SVM-Ens is constantly in the medium range and always better than RO-SVM. On the right we show the same plot for the $\ell_\infty$-norm. As already observed before the accuracy of all methods is much smaller. Here RO-SVM has the best accuracy while all other methods have a performance varying between $45\%$ and $50\%$. Here for larger $k$ Ens-E and Ens-R seem to be more stable and having its best accuracy. All values can be found in Table \ref{tbl:kDevelop_gaussian_l2} and \ref{tbl:kDevelop_gaussian_linf}.

\begin{figure}[h!]
\centering
\includegraphics[scale=0.4]{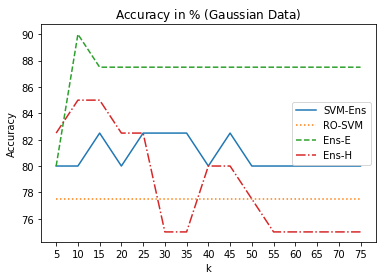}
\includegraphics[scale=0.4]{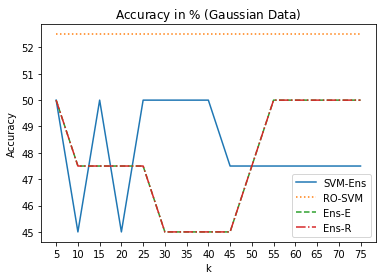}
\caption{Development of the accuracy of the different methods over $k$ for defense-level $r_d=0.1$ and attack-level $r_a=1.0$ for for $\ell_2$-norm (left) and $\ell_\infty$-norm (right).}
\label{fig:Gaussian_k}
\end{figure}

In Figure \ref{fig:Gaussian_tradeoff} we present the trade-off curves of all methods where for each method the defense-level is chosen which has the best average accuracy over all attack-levels. It can be seen that for the $\ell_2$-norm all methods seem to have a better trade-off than SVM-Ens, where Ens-E and Ens-H perform better than RO-SVM for small attacks, while RO-SVM is slightly better for larger attacks. For the $\ell_\infty$-norm the trade-off curves of all method look pretty similar with a slightly better trade-off for Ens-E and Ens-H for some large attacks.

\begin{figure}[h!]
\centering
\includegraphics[scale=0.4]{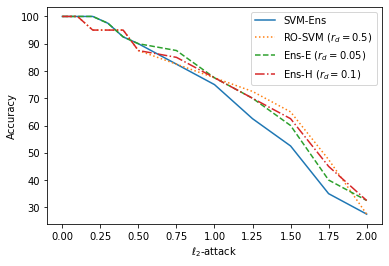}
\includegraphics[scale=0.4]{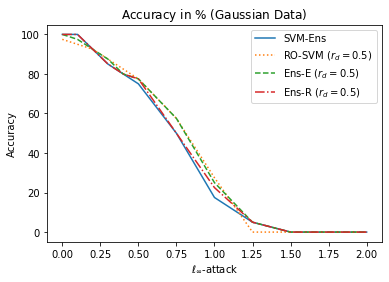}
\caption{Trade-off curves for best-in-average defense-levels for $\ell_2$-norm (left) and $\ell_\infty$-norm (right).}
\label{fig:Gaussian_tradeoff}
\end{figure}

In Figure \ref{fig:Gaussian_runtime} we show the runtime in seconds of the different methods for different defense-levels. SVM-Ens and RO-SVM clearly outperform the robust ensemble methods. Furthermore the runtime of Ens-E increases for larger defense-levels while the runtime of Ens-H seems to have only small increase in runtime. The runtime of Ens-R increases with growing $k$ and is even larger compared to Ens-E. All runtime values can be found in Table \ref{tbl:time_gaussian_l2} and \ref{tbl:time_gaussian_linf} in the Appendix.

\begin{figure}[h!]
\centering
\includegraphics[scale=0.4]{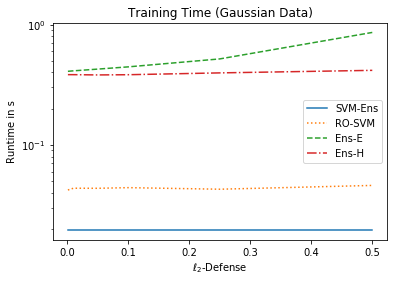}
\includegraphics[scale=0.4]{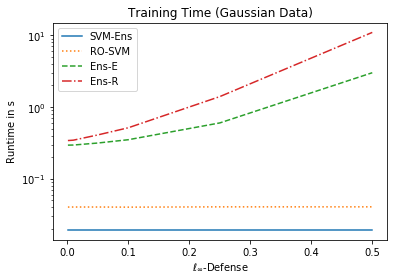}
\caption{Runtime in seconds of the different methods with $\ell_2$-norm (left) and $\ell_\infty$-norm (right) for different defense-levels.}
\label{fig:Gaussian_runtime}
\end{figure}

\subsection{Breast Cancer Wisconsin Dataset}
In this subsection we consider the Breast Cancer Wisconsin dataset (BCW). In Figure \ref{fig:BCW_l2} the difference in accuracy of the titled methods and SVM-Ens for $\ell_2$-norm defenses and attacks is shown. The results indicate that RO-SVM performs similar to SVM-Ens for small to medium attack sizes while it performs better for large attacks, especially for defense-level $r_d=0.5$. On the other hand Ens-E has a slightly worse accuracy for small to medium attacks and a better accuracy for large attacks for most defense-levels. The same effect but even stronger can be observed for Ens-H clearly having the best performance for large attacks for all defense-levels. As for Random Gaussian data the results indicate that the robust ensemble methods Ens-E and Ens-H are much more stable regarding the variation of defense-levels. Nevertheless this comes with a decrease in accuracy for small attacks on the BCW dataset. All accuracy values can be found in Table \ref{tbl:acc_BCW_l2} in the Appendix showing that RO-SVM has the best performance for most of the attack-levels and Ens-H achieves the best performance for the two-largest attack-levels.

\begin{figure}[h!]
\centering
\includegraphics[scale=0.28]{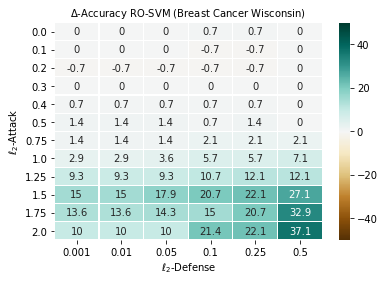}
\includegraphics[scale=0.28]{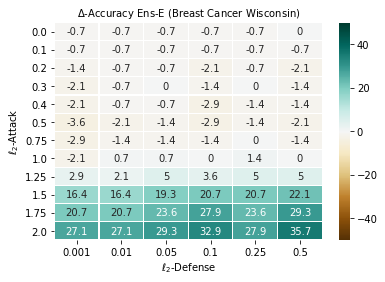}
\includegraphics[scale=0.28]{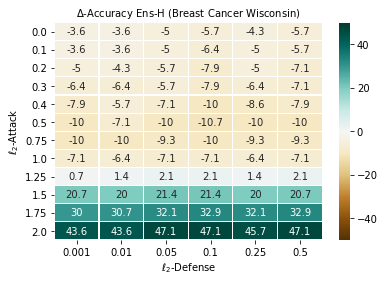}
\caption{Difference between the percentage values of the accuracy of the titled method and the accuracy of SVM-Ens for $\ell_2$-norm defense on BCW.}
\label{fig:BCW_l2}
\end{figure}

In Figure \ref{fig:BCW_k} on the left we show the development of the accuracy of the different methods over $k$ for $\ell_2$-norm defense-level $r_d=0.1$ and attack-level $r_a=1.0$. Here the accuracy of Ens-E and Ens-H decreases drastically for increasing $k$ while the accuracy of SVM-Ens improves. This means for BCW choosing the right $k$ is a challenging task. Nevertheless here the RO-SVM outperforms all other methods. All accuracy values can be found in Table \ref{tbl:kDevelop_BCW_l2}.

On the right in the same figure we present the trade-off curves of all methods where for each method the defense-level is chosen which has the best average accuracy over all attack-levels. It can be seen that RO-SVM has a better overall trade-off curve while Ens-H get better for large attacks.

\begin{figure}[h!]
\centering
\includegraphics[scale=0.4]{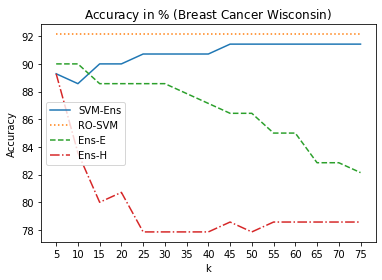}
\includegraphics[scale=0.4]{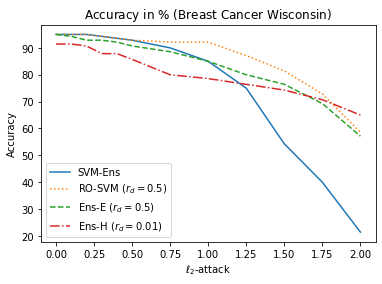}
\caption{Development of the accuracy of the different methods over $k$ for $\ell_2$-norm defense-level $r_d=0.1$ and attack-level $r_a=1.0$. Trade-off curves for best-in-average defense-levels (right).}
\label{fig:BCW_k}
\end{figure}

In Figure \ref{fig:BCW_runtime} we show the runtime in seconds of the different methods for different defense-levels. The computation time of SVM-Ens outperforms the runtime of the other methods. Furthermore the runtime of Ens-E increases for larger defense-levels while the runtime of Ens-H and RO-SVM seems to have only a small increase. All runtime values can be found in Table \ref{tbl:time_BCW_l2} in the Appendix.

\begin{figure}[h!]
\centering
\includegraphics[scale=0.4]{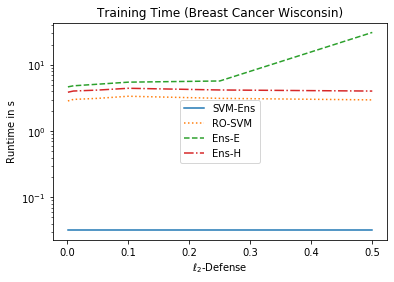}
\caption{Runtime in seconds of the different methods for different defense-levels.}
\label{fig:BCW_runtime}
\end{figure}

\subsection{Digits Dataset}
In this subsection we consider the Digits dataset. As described before we consider two binary classification variants, one were we try to classify digit $3$ (Digits(3)) and one were we try to classify digit $7$ (Digits(7)). 

In Figure \ref{fig:digits3_l2} the difference in accuracy of the titled methods and SVM-Ens for $\ell_2$-norm defenses is shown for Digits(3). The results indicate that RO-SVM performs very bad for small defense levels while it performs very well for $r_d=0.5$ and for large attacks while for small attack-levels there is no significant improvement. Compared to this Ens-E has a very stable accuracy for all defense levels, performing best for large  attacks but never better than RO-SVM for $r_d=0.5$. Note that an advantage here is the stability of the Ens-E method. On the other hand Ens-H clearly outperforms the other methods having a stable behaviour over all defense-levels and even better accuracies than RO-SVM. All accuracy values can be found in Table \ref{tbl:acc_digits3_l2} in the Appendix.

\begin{figure}[h!]
\centering
\includegraphics[scale=0.28]{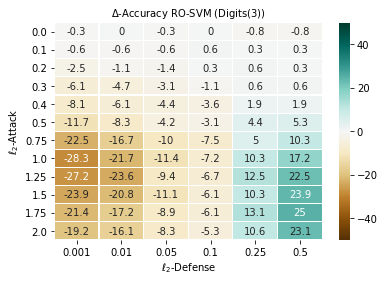}
\includegraphics[scale=0.28]{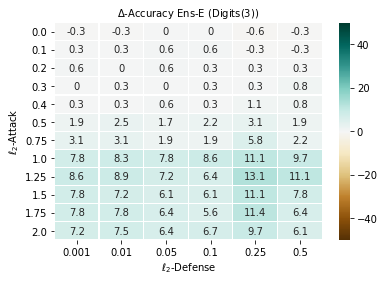}
\includegraphics[scale=0.28]{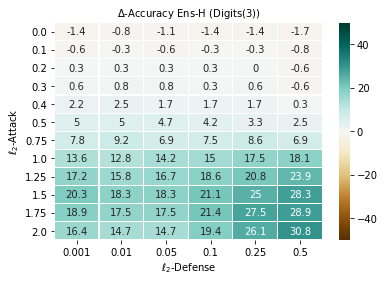}
\caption{Difference between the percentage values of the accuracy of the titled method and the accuracy of SVM-Ens for $\ell_2$-norm defense on Digits(3).}
\label{fig:digits3_l2}
\end{figure}

In Figure \ref{fig:digits7_l2} the difference in accuracy of the titled methods and SVM-Ens for $\ell_2$-norm defenses is shown for Digits(7). The results are pretty similar to the results for Digits(3) but with an increase in accuracy for the Ens-E and Ens-H for large attack-levels. RO-SVM also performs better for $r_d=0.5$ but has significantly worse accuracies for small defense-levels. All accuracy values can be found in Table \ref{tbl:acc_digits7_l2} in the Appendix.

\begin{figure}[h!]
\centering
\includegraphics[scale=0.28]{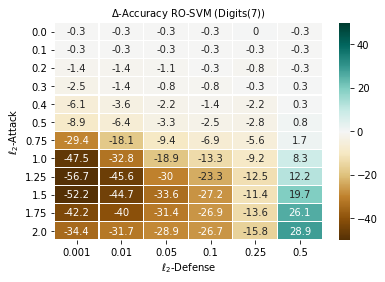}
\includegraphics[scale=0.28]{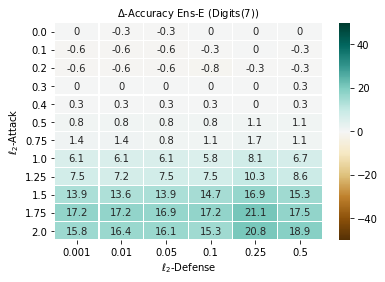}
\includegraphics[scale=0.28]{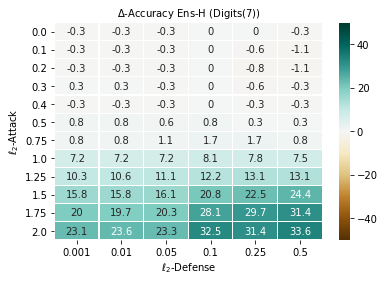}
\caption{Difference between the percentage values of the accuracy of the titled method and the accuracy of SVM-Ens for $\ell_2$-norm defense on Digits(7).}
\label{fig:digits7_l2}
\end{figure}

In Figure \ref{fig:digits_tradeoff} we present the trade-off curves of all methods where for each method the defense-level is chosen which has the best average accuracy over all attack-levels. While for Digits(3) RO-SVM has a slight advantage on medium-sized attacks for larger attacks Ens-R clearly outperforms all other methods for both datasets Digits(3) and Digits(7). Ens-E has a very small advantage for medium-sized attacks on Digits(7) but clearly has a worse trade-off when it comes to larger attacks. All methods clearly outperform SVM-Ens.

\begin{figure}[h!]
\centering
\includegraphics[scale=0.4]{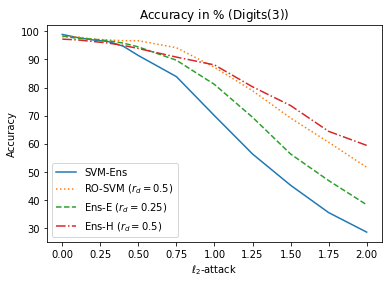}
\includegraphics[scale=0.4]{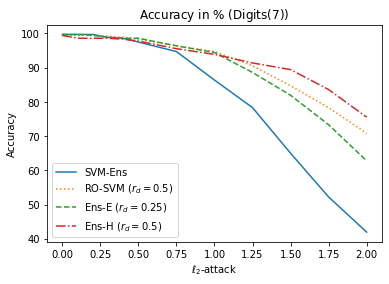}
\caption{Trade-off curves for best-in-average defense-levels for Digits(3) (left) and Digits(7) (right).}
\label{fig:digits_tradeoff}
\end{figure}

In Figure \ref{fig:digits_runtime} we show the runtime in seconds of the different methods for different defense-levels. While RO-SVM and Ens-H have a similar runtime for all defense-levels, the runtime of Ens-E increases for larger defense-levels while the runtime of Ens-H and RO-SVM does not increase much or even decreases for Digits(7). All runtime values can be found in Table \ref{tbl:time_digits3_l2} and \ref{tbl:time_digits7_l2} in the Appendix.

\begin{figure}[h!]
\centering
\includegraphics[scale=0.4]{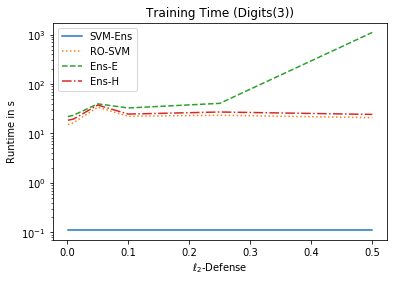}
\includegraphics[scale=0.4]{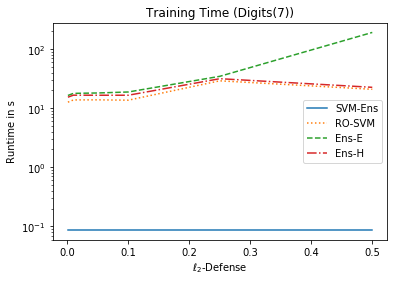}
\caption{Runtime in seconds of the different methods for different defense-levels on Digits(3) (left) and Digits(7) (right).}
\label{fig:digits_runtime}
\end{figure}
 
\section{Conclusion}\label{sec:conclusion}
We present a new iterative ensemble method which tackles data uncertainty and incorporates robustness. The idea is to iteratively solve a classical SVM each time on a perturbed variant of the original dataset where the perturbation is calculated by an adversarial problem. Afterwards a majority vote over all SVM solutions is performed to classify unseen data points. We consider the different variants of the adversarial problem, the exact problem, a relaxed variant and a heuristic variant. All methods are tested on random and realistic datasets and the computations show that the new methods have a much more stable behaviour than the classical robust SVM model when we consider different defense-levels. While on random Gaussian data the new methods slightly outperform the classical robust model and the non-robust SVM ensemble method, on the Digits dataset the heuristic variant is clearly the best model with large improvements in accuracy especially for large attack-levels. On the other hand for the Breast Cancer Wisconsin dataset RO-SVM still performs well while the new models can have better accuracies for large attacks coming along with an accuracy reduction for small attack-levels. Additionally the results show, that $\ell_2$-defenses perform much better than $\ell_\infty$-defenses.

A large drawback of the relaxed variant is that no efficient formulation for $\ell_2$-norm defenses could be derived. Since the computations show a large advantage of the $\ell_2$-norm models this gap should be filled in the future. Furthermore while all solutions which are part of the calculated ensembles are not independent of each other, since in each iteration the adversarial perturbations depend on the previously calculated solutions, future research could consider ensembles where the collaboration of the solutions is improved, i.e. each SVM solution should consider the decisions of all other solutions.   

\small
\bibliography{references}
\bibliographystyle{abbrv}

\newpage
\section*{Appendix}
In the following we show all accuracies and computation times presented in Section \ref{sec:computations}. Bold values are the best values in each row.

\begin{table}[h!]
\tiny
\setlength{\tabcolsep}{1.5pt}
\begin{center}
\begin{tabular}{c|l|llllll|llllll|llllll}
&SVM-Ens & \multicolumn{6}{c|}{RO-SVM} & \multicolumn{6}{c|}{Ens-E} & \multicolumn{6}{c}{Ens-H}\\
\diagbox{$r_a$}{$r_d$} & $0.0$ & $0.001$ & $0.01$ & $0.05$ & $0.1$ & $0.25$ & $0.5$ & $0.001$ & $0.01$ & $0.05$ & $0.1$ & $0.25$ & $0.5$ & $0.001$ & $0.01$ & $0.05$ & $0.1$ & $0.25$ & $0.5$ \\
\hline
0.0&\textbf{100}&\textbf{100}&\textbf{100}&\textbf{100}&\textbf{100}&\textbf{100}&\textbf{100}&\textbf{100}&\textbf{100}&\textbf{100}&\textbf{100}&\textbf{100}&\textbf{100}&\textbf{100}&\textbf{100}&\textbf{100}&\textbf{100}&92.5&92.5\\
0.1&\textbf{100}&\textbf{100}&\textbf{100}&\textbf{100}&\textbf{100}&\textbf{100}&\textbf{100}&\textbf{100}&\textbf{100}&\textbf{100}&\textbf{100}&\textbf{100}&\textbf{100}&\textbf{100}&\textbf{100}&\textbf{100}&\textbf{100}&92.5&90.0\\
0.2&\textbf{100}&97.5&97.5&\textbf{100}&97.5&97.5&95.0&97.5&97.5&\textbf{100}&97.5&\textbf{100}&\textbf{100}&95.0&95.0&95.0&95.0&90.0&90.0\\
0.3&\textbf{97.5}&92.5&95.0&95.0&95.0&\textbf{97.5}&95.0&\textbf{97.5}&\textbf{97.5}&\textbf{97.5}&\textbf{97.5}&\textbf{97.5}&\textbf{97.5}&95.0&95.0&95.0&95.0&90.0&87.5\\
0.4&92.5&92.5&92.5&92.5&92.5&92.5&\textbf{95.0}&92.5&92.5&92.5&92.5&92.5&92.5&\textbf{95.0}&\textbf{95.0}&\textbf{95.0}&\textbf{95.0}&87.5&85.0\\
0.5&90.0&90.0&87.5&90.0&90.0&\textbf{92.5}&87.5&90.0&90.0&90.0&90.0&90.0&90.0&87.5&87.5&87.5&87.5&85.0&85.0\\
0.75&82.5&77.5&77.5&80.0&77.5&77.5&82.5&85.0&85.0&\textbf{87.5}&\textbf{87.5}&\textbf{87.5}&80.0&85.0&85.0&85.0&85.0&82.5&80.0\\
1.0&75.0&67.5&67.5&\textbf{77.5}&72.5&70.0&\textbf{77.5}&\textbf{77.5}&\textbf{77.5}&\textbf{77.5}&\textbf{77.5}&\textbf{77.5}&\textbf{77.5}&\textbf{77.5}&\textbf{77.5}&\textbf{77.5}&\textbf{77.5}&75.0&75.0\\
1.25&62.5&62.5&62.5&62.5&62.5&57.5&\textbf{72.5}&\textbf{72.5}&\textbf{72.5}&70.0&70.0&70.0&67.5&70.0&70.0&70.0&70.0&\textbf{72.5}&70.0\\
1.5&52.5&52.5&50.0&55.0&52.5&47.5&65.0&60.0&60.0&60.0&60.0&60.0&57.5&62.5&62.5&62.5&62.5&\textbf{67.5}&65.0\\
1.75&35.0&32.5&32.5&35.0&35.0&27.5&47.5&40.0&40.0&40.0&40.0&40.0&40.0&40.0&42.5&42.5&45.0&\textbf{55.0}&\textbf{55.0}\\
2.0&27.5&22.5&22.5&25.0&22.5&25.0&27.5&32.5&32.5&32.5&32.5&32.5&27.5&32.5&32.5&32.5&32.5&\textbf{42.5}&37.5
\end{tabular}
\end{center}
\caption{Accuracy (in \%) for the Gaussian dataset with $\ell_2$-norm defense.}
\label{tbl:acc_gaussian_l2}
\end{table}

\begin{table}[h!]
\scriptsize
\setlength{\tabcolsep}{1.5pt}
\begin{center}
\begin{tabular}{c|lllllllllllllll}

$k$ & 5 & 10 & 15 & 20 & 25 & 30 & 35 & 40 & 45 & 50 & 55 & 60 & 65 & 70 & 75 \\
\hline
SVM-Ens&80.0&80.0&\textbf{82.5}&80.0&\textbf{82.5}&\textbf{82.5}&\textbf{82.5}&80.0&\textbf{82.5}&80.0&80.0&80.0&80.0&80.0&80.0\\
RO-SVM&77.5&77.5&77.5&77.5&77.5&77.5&77.5&77.5&77.5&77.5&77.5&77.5&77.5&77.5&77.5\\
Ens-E&80.0&\textbf{90.0}&87.5&87.5&87.5&87.5&87.5&87.5&87.5&87.5&87.5&87.5&87.5&87.5&87.5\\
Ens-H&82.5&\textbf{85.0}&\textbf{85.0}&82.5&82.5&75.0&75.0&80.0&80.0&77.5&75.0&75.0&75.0&75.0&75.0
\end{tabular}
\end{center}
\caption{Accuracy (in \%) for the Gaussian dataset with $\ell_2$-norm defense with $r_d=0.1$ and $\ell_2$-norm attack with $r_a=1.0$.}
\label{tbl:kDevelop_gaussian_l2}
\end{table}

\begin{table}[h!]
\scriptsize
\setlength{\tabcolsep}{1.5pt}
\begin{center}
\begin{tabular}{c|lllllllllllllll}

$k$ & 5 & 10 & 15 & 20 & 25 & 30 & 35 & 40 & 45 & 50 & 55 & 60 & 65 & 70 & 75 \\
\hline
SVM-Ens&\textbf{50.0}&45.0&\textbf{50.0}&45.0&\textbf{50.0}&\textbf{50.0}&\textbf{50.0}&\textbf{50.0}&47.5&47.5&47.5&47.5&47.5&47.5&47.5\\
RO-SVM&52.5&52.5&52.5&52.5&52.5&52.5&52.5&52.5&52.5&52.5&52.5&52.5&52.5&52.5&52.5\\
Ens-E&\textbf{50.0}&47.5&47.5&47.5&47.5&45.0&45.0&45.0&45.0&47.5&\textbf{50.0}&\textbf{50.0}&\textbf{50.0}&\textbf{50.0}&\textbf{50.0}\\
Ens-R&\textbf{50.0}&47.5&47.5&47.5&47.5&45.0&45.0&45.0&45.0&47.5&\textbf{50.0}&\textbf{50.0}&\textbf{50.0}&\textbf{50.0}&\textbf{50.0}
\end{tabular}
\end{center}
\caption{Accuracy (in \%) for the Gaussian dataset with $\ell_\infty$-norm defense with $r_d=0.1$ and $\ell_\infty$-norm attack with $r_a=1.0$.}
\label{tbl:kDevelop_gaussian_linf}
\end{table}

\begin{table}[h!]
\tiny
\setlength{\tabcolsep}{1.5pt}
\begin{center}
\begin{tabular}{c|l|llllll|llllll|llllll}
&SVM-Ens & \multicolumn{6}{c|}{RO-SVM} & \multicolumn{6}{c|}{Ens-E} & \multicolumn{6}{c}{Ens-R}\\
\diagbox{$r_a$}{$r_d$} & $0.0$ & $0.001$ & $0.01$ & $0.05$ & $0.1$ & $0.25$ & $0.5$ & $0.001$ & $0.01$ & $0.05$ & $0.1$ & $0.25$ & $0.5$ & $0.001$ & $0.01$ & $0.05$ & $0.1$ & $0.25$ & $0.5$ \\
\hline
0.0&\textbf{100}&95.0&95.0&\textbf{100}&\textbf{100}&\textbf{100}&97.5&\textbf{100}&\textbf{100}&\textbf{100}&\textbf{100}&\textbf{100}&\textbf{100}&\textbf{100}&\textbf{100}&\textbf{100}&\textbf{100}&\textbf{100}&\textbf{100}\\
0.1&\textbf{100}&87.5&87.5&95.0&97.5&\textbf{100}&95.0&97.5&97.5&97.5&\textbf{100}&\textbf{100}&97.5&97.5&97.5&97.5&\textbf{100}&\textbf{100}&\textbf{100}\\
0.2&92.5&80.0&80.0&\textbf{95.0}&92.5&92.5&92.5&92.5&92.5&92.5&90.0&90.0&92.5&92.5&92.5&92.5&90.0&90.0&92.5\\
0.3&85.0&80.0&80.0&80.0&85.0&85.0&87.5&87.5&87.5&\textbf{90.0}&\textbf{90.0}&87.5&87.5&87.5&87.5&\textbf{90.0}&\textbf{90.0}&87.5&85.0\\
0.4&80.0&72.5&72.5&75.0&77.5&80.0&\textbf{82.5}&80.0&80.0&80.0&77.5&77.5&80.0&80.0&80.0&80.0&77.5&77.5&80.0\\
0.5&75.0&65.0&65.0&67.5&70.0&70.0&\textbf{77.5}&75.0&75.0&\textbf{77.5}&\textbf{77.5}&75.0&\textbf{77.5}&75.0&75.0&\textbf{77.5}&\textbf{77.5}&75.0&\textbf{77.5}\\
0.75&50.0&50.0&50.0&50.0&52.5&55.0&\textbf{57.5}&47.5&47.5&47.5&47.5&52.5&\textbf{57.5}&47.5&47.5&47.5&47.5&52.5&50.0\\
1.0&17.5&25.0&25.0&22.5&25.0&22.5&\textbf{27.5}&22.5&22.5&22.5&20.0&17.5&25.0&22.5&22.5&22.5&20.0&17.5&22.5\\
1.25&5.0&\textbf{10.0}&\textbf{10.0}&\textbf{10.0}&7.5&5.0&0.0&2.5&2.5&2.5&2.5&5.0&5.0&2.5&2.5&2.5&2.5&5.0&5.0\\
1.5&0.0&\textbf{7.5}&\textbf{7.5}&5.0&0.0&0.0&0.0&0.0&0.0&0.0&0.0&0.0&0.0&0.0&0.0&0.0&0.0&0.0&0.0\\
1.75&0.0&\textbf{5.0}&\textbf{5.0}&0.0&0.0&0.0&0.0&0.0&0.0&0.0&0.0&0.0&0.0&0.0&0.0&0.0&0.0&0.0&0.0\\
2.0&\textbf{0.0}&\textbf{0.0}&\textbf{0.0}&\textbf{0.0}&\textbf{0.0}&\textbf{0.0}&\textbf{0.0}&\textbf{0.0}&\textbf{0.0}&\textbf{0.0}&\textbf{0.0}&\textbf{0.0}&\textbf{0.0}&\textbf{0.0}&\textbf{0.0}&\textbf{0.0}&\textbf{0.0}&\textbf{0.0}&\textbf{0.0}
\end{tabular}
\end{center}
\caption{Accuracy (in \%) for the Gaussian dataset with $\ell_\infty$-norm defense.}
\label{tbl:acc_gaussian_linf}
\end{table}

\begin{table}[h!]
\tiny
\setlength{\tabcolsep}{1.5pt}
\begin{center}
\begin{tabular}{c|l|llllll|llllll|llllll}
&SVM-Ens & \multicolumn{6}{c|}{RO-SVM} & \multicolumn{6}{c|}{Ens-E} & \multicolumn{6}{c}{Ens-H}\\
\diagbox{$r_a$}{$r_d$} & $0.0$ & $0.001$ & $0.01$ & $0.05$ & $0.1$ & $0.25$ & $0.5$ & $0.001$ & $0.01$ & $0.05$ & $0.1$ & $0.25$ & $0.5$ & $0.001$ & $0.01$ & $0.05$ & $0.1$ & $0.25$ & $0.5$ \\
\hline
0.0&\textbf{0.0}&\textbf{0.0}&\textbf{0.0}&\textbf{0.0}&\textbf{0.0}&\textbf{0.0}&\textbf{0.0}&0.4&0.4&0.4&0.4&0.5&0.9&0.4&0.4&0.4&0.4&0.4&0.4\\
0.1&\textbf{0.0}&\textbf{0.0}&\textbf{0.0}&\textbf{0.0}&\textbf{0.0}&\textbf{0.0}&\textbf{0.0}&0.4&0.4&0.4&0.4&0.5&0.9&0.4&0.4&0.4&0.4&0.4&0.4\\
0.2&\textbf{0.0}&\textbf{0.0}&\textbf{0.0}&\textbf{0.0}&\textbf{0.0}&\textbf{0.0}&\textbf{0.0}&0.4&0.4&0.4&0.4&0.5&0.9&0.4&0.4&0.4&0.4&0.4&0.4\\
0.3&\textbf{0.0}&\textbf{0.0}&\textbf{0.0}&\textbf{0.0}&\textbf{0.0}&\textbf{0.0}&\textbf{0.0}&0.4&0.4&0.4&0.4&0.5&0.9&0.4&0.4&0.4&0.4&0.4&0.4\\
0.4&\textbf{0.0}&\textbf{0.0}&\textbf{0.0}&\textbf{0.0}&\textbf{0.0}&\textbf{0.0}&\textbf{0.0}&0.4&0.4&0.4&0.4&0.5&0.9&0.4&0.4&0.4&0.4&0.4&0.4\\
0.5&\textbf{0.0}&\textbf{0.0}&\textbf{0.0}&\textbf{0.0}&\textbf{0.0}&\textbf{0.0}&\textbf{0.0}&0.4&0.4&0.4&0.4&0.5&0.9&0.4&0.4&0.4&0.4&0.4&0.4\\
0.75&\textbf{0.0}&\textbf{0.0}&\textbf{0.0}&\textbf{0.0}&\textbf{0.0}&\textbf{0.0}&\textbf{0.0}&0.4&0.4&0.4&0.4&0.5&0.9&0.4&0.4&0.4&0.4&0.4&0.4\\
1.0&\textbf{0.0}&\textbf{0.0}&\textbf{0.0}&\textbf{0.0}&\textbf{0.0}&\textbf{0.0}&\textbf{0.0}&0.4&0.4&0.4&0.4&0.5&0.9&0.4&0.4&0.4&0.4&0.4&0.4\\
1.25&\textbf{0.0}&\textbf{0.0}&\textbf{0.0}&\textbf{0.0}&\textbf{0.0}&\textbf{0.0}&\textbf{0.0}&0.4&0.4&0.4&0.4&0.5&0.9&0.4&0.4&0.4&0.4&0.4&0.4\\
1.5&\textbf{0.0}&\textbf{0.0}&\textbf{0.0}&\textbf{0.0}&\textbf{0.0}&\textbf{0.0}&\textbf{0.0}&0.4&0.4&0.4&0.4&0.5&0.9&0.4&0.4&0.4&0.4&0.4&0.4\\
1.75&\textbf{0.0}&\textbf{0.0}&\textbf{0.0}&\textbf{0.0}&\textbf{0.0}&\textbf{0.0}&\textbf{0.0}&0.4&0.4&0.4&0.4&0.5&0.9&0.4&0.4&0.4&0.4&0.4&0.4\\
2.0&\textbf{0.0}&\textbf{0.0}&\textbf{0.0}&\textbf{0.0}&\textbf{0.0}&\textbf{0.0}&\textbf{0.0}&0.4&0.4&0.4&0.4&0.5&0.9&0.4&0.4&0.4&0.4&0.4&0.4
\end{tabular}
\end{center}
\caption{Training time (in s) for the Gaussian dataset with $\ell_2$-norm defense.}
\label{tbl:time_gaussian_l2}
\end{table}

\begin{table}[h!]
\tiny
\setlength{\tabcolsep}{1.5pt}
\begin{center}
\begin{tabular}{c|l|llllll|llllll|llllll}
&SVM-Ens & \multicolumn{6}{c|}{RO-SVM} & \multicolumn{6}{c|}{Ens-E} & \multicolumn{6}{c}{Ens-R}\\
\diagbox{$r_a$}{$r_d$} & $0.0$ & $0.001$ & $0.01$ & $0.05$ & $0.1$ & $0.25$ & $0.5$ & $0.001$ & $0.01$ & $0.05$ & $0.1$ & $0.25$ & $0.5$ & $0.001$ & $0.01$ & $0.05$ & $0.1$ & $0.25$ & $0.5$ \\
\hline
0.0&\textbf{0.0}&\textbf{0.0}&\textbf{0.0}&\textbf{0.0}&\textbf{0.0}&\textbf{0.0}&\textbf{0.0}&0.3&0.3&0.3&0.4&0.6&3.0&0.3&0.3&0.4&0.5&1.4&10.9\\
0.1&\textbf{0.0}&\textbf{0.0}&\textbf{0.0}&\textbf{0.0}&\textbf{0.0}&\textbf{0.0}&\textbf{0.0}&0.3&0.3&0.3&0.4&0.6&3.0&0.3&0.3&0.4&0.5&1.4&10.9\\
0.2&\textbf{0.0}&\textbf{0.0}&\textbf{0.0}&\textbf{0.0}&\textbf{0.0}&\textbf{0.0}&\textbf{0.0}&0.3&0.3&0.3&0.4&0.6&3.0&0.3&0.3&0.4&0.5&1.4&10.9\\
0.3&\textbf{0.0}&\textbf{0.0}&\textbf{0.0}&\textbf{0.0}&\textbf{0.0}&\textbf{0.0}&\textbf{0.0}&0.3&0.3&0.3&0.4&0.6&3.0&0.3&0.3&0.4&0.5&1.4&10.9\\
0.4&\textbf{0.0}&\textbf{0.0}&\textbf{0.0}&\textbf{0.0}&\textbf{0.0}&\textbf{0.0}&\textbf{0.0}&0.3&0.3&0.3&0.4&0.6&3.0&0.3&0.3&0.4&0.5&1.4&10.9\\
0.5&\textbf{0.0}&\textbf{0.0}&\textbf{0.0}&\textbf{0.0}&\textbf{0.0}&\textbf{0.0}&\textbf{0.0}&0.3&0.3&0.3&0.4&0.6&3.0&0.3&0.3&0.4&0.5&1.4&10.9\\
0.75&\textbf{0.0}&\textbf{0.0}&\textbf{0.0}&\textbf{0.0}&\textbf{0.0}&\textbf{0.0}&\textbf{0.0}&0.3&0.3&0.3&0.4&0.6&3.0&0.3&0.3&0.4&0.5&1.4&10.9\\
1.0&\textbf{0.0}&\textbf{0.0}&\textbf{0.0}&\textbf{0.0}&\textbf{0.0}&\textbf{0.0}&\textbf{0.0}&0.3&0.3&0.3&0.4&0.6&3.0&0.3&0.3&0.4&0.5&1.4&10.9\\
1.25&\textbf{0.0}&\textbf{0.0}&\textbf{0.0}&\textbf{0.0}&\textbf{0.0}&\textbf{0.0}&\textbf{0.0}&0.3&0.3&0.3&0.4&0.6&3.0&0.3&0.3&0.4&0.5&1.4&10.9\\
1.5&\textbf{0.0}&\textbf{0.0}&\textbf{0.0}&\textbf{0.0}&\textbf{0.0}&\textbf{0.0}&\textbf{0.0}&0.3&0.3&0.3&0.4&0.6&3.0&0.3&0.3&0.4&0.5&1.4&10.9\\
1.75&\textbf{0.0}&\textbf{0.0}&\textbf{0.0}&\textbf{0.0}&\textbf{0.0}&\textbf{0.0}&\textbf{0.0}&0.3&0.3&0.3&0.4&0.6&3.0&0.3&0.3&0.4&0.5&1.4&10.9\\
2.0&\textbf{0.0}&\textbf{0.0}&\textbf{0.0}&\textbf{0.0}&\textbf{0.0}&\textbf{0.0}&\textbf{0.0}&0.3&0.3&0.3&0.4&0.6&3.0&0.3&0.3&0.4&0.5&1.4&10.9
\end{tabular}
\end{center}
\caption{Training time (in s) for the Gaussian dataset with $\ell_\infty$-norm defense.}
\label{tbl:time_gaussian_linf}
\end{table}

\begin{table}[h!]
\tiny
\setlength{\tabcolsep}{1.5pt}
\begin{center}
\begin{tabular}{c|l|llllll|llllll|llllll}
&SVM-Ens & \multicolumn{6}{c|}{RO-SVM} & \multicolumn{6}{c|}{Ens-E} & \multicolumn{6}{c}{Ens-H}\\
\diagbox{$r_a$}{$r_d$} & $0.0$ & $0.001$ & $0.01$ & $0.05$ & $0.1$ & $0.25$ & $0.5$ & $0.001$ & $0.01$ & $0.05$ & $0.1$ & $0.25$ & $0.5$ & $0.001$ & $0.01$ & $0.05$ & $0.1$ & $0.25$ & $0.5$ \\
\hline
0.0&95.0&95.0&95.0&95.0&\textbf{95.7}&\textbf{95.7}&95.0&94.3&94.3&94.3&94.3&94.3&95.0&91.4&91.4&90.0&89.3&90.7&89.3\\
0.1&\textbf{95.0}&\textbf{95.0}&\textbf{95.0}&\textbf{95.0}&94.3&94.3&\textbf{95.0}&94.3&94.3&94.3&94.3&94.3&94.3&91.4&91.4&90.0&88.6&90.0&89.3\\
0.2&\textbf{95.0}&94.3&94.3&94.3&94.3&94.3&\textbf{95.0}&93.6&94.3&94.3&92.9&94.3&92.9&90.0&90.7&89.3&87.1&90.0&87.9\\
0.3&\textbf{94.3}&\textbf{94.3}&\textbf{94.3}&\textbf{94.3}&\textbf{94.3}&\textbf{94.3}&\textbf{94.3}&92.1&93.6&\textbf{94.3}&92.9&\textbf{94.3}&92.9&87.9&87.9&88.6&86.4&87.9&87.1\\
0.4&93.6&\textbf{94.3}&\textbf{94.3}&\textbf{94.3}&\textbf{94.3}&\textbf{94.3}&93.6&91.4&92.9&92.9&90.7&92.1&92.1&85.7&87.9&86.4&83.6&85.0&85.7\\
0.5&92.9&\textbf{94.3}&\textbf{94.3}&\textbf{94.3}&93.6&\textbf{94.3}&92.9&89.3&90.7&91.4&90.0&91.4&90.7&82.9&85.7&82.9&82.1&82.9&82.9\\
0.75&90.0&91.4&91.4&91.4&\textbf{92.1}&\textbf{92.1}&\textbf{92.1}&87.1&88.6&88.6&88.6&90.0&88.6&80.0&80.0&80.7&80.0&80.7&80.7\\
1.0&85.0&87.9&87.9&88.6&90.7&90.7&\textbf{92.1}&82.9&85.7&85.7&85.0&86.4&85.0&77.9&78.6&77.9&77.9&78.6&77.9\\
1.25&75.0&84.3&84.3&84.3&85.7&\textbf{87.1}&\textbf{87.1}&77.9&77.1&80.0&78.6&80.0&80.0&75.7&76.4&77.1&77.1&76.4&77.1\\
1.5&54.3&69.3&69.3&72.1&75.0&76.4&\textbf{81.4}&70.7&70.7&73.6&75.0&75.0&76.4&75.0&74.3&75.7&75.7&74.3&75.0\\
1.75&40.0&53.6&53.6&54.3&55.0&60.7&\textbf{72.9}&60.7&60.7&63.6&67.9&63.6&69.3&70.0&70.7&72.1&\textbf{72.9}&72.1&\textbf{72.9}\\
2.0&21.4&31.4&31.4&31.4&42.9&43.6&58.6&48.6&48.6&50.7&54.3&49.3&57.1&65.0&65.0&\textbf{68.6}&\textbf{68.6}&67.1&\textbf{68.6}
\end{tabular}
\end{center}
\caption{Accuracy (in \%) for the BCW dataset with $\ell_2$-norm defense.}
\label{tbl:acc_BCW_l2}
\end{table}

\begin{table}[h!]
\scriptsize
\setlength{\tabcolsep}{1.5pt}
\begin{center}
\begin{tabular}{c|lllllllllllllll}

$k$ & 5 & 10 & 15 & 20 & 25 & 30 & 35 & 40 & 45 & 50 & 55 & 60 & 65 & 70 & 75 \\
\hline
SVM-Ens&89.3&88.6&90.0&90.0&90.7&90.7&90.7&90.7&\textbf{91.4}&\textbf{91.4}&\textbf{91.4}&\textbf{91.4}&\textbf{91.4}&\textbf{91.4}&\textbf{91.4}\\
RO-SVM&92.1&92.1&92.1&92.1&92.1&92.1&92.1&92.1&92.1&92.1&92.1&92.1&92.1&92.1&92.1\\
Ens-E&\textbf{90.0}&\textbf{90.0}&88.6&88.6&88.6&88.6&87.9&87.1&86.4&86.4&85.0&85.0&82.9&82.9&82.1\\
Ens-H&\textbf{89.3}&83.6&80.0&80.7&77.9&77.9&77.9&77.9&78.6&77.9&78.6&78.6&78.6&78.6&78.6
\end{tabular}
\end{center}
\caption{Accuracy (in \%) for the BCW dataset with $\ell_2$-norm defense with $r_d=0.1$ and $\ell_2$-norm attack with $r_a=1.0$.}
\label{tbl:kDevelop_BCW_l2}
\end{table}

\begin{table}[h!]
\tiny
\setlength{\tabcolsep}{1.5pt}
\begin{center}
\begin{tabular}{c|l|llllll|llllll|llllll}
&SVM-Ens & \multicolumn{6}{c|}{RO-SVM} & \multicolumn{6}{c|}{Ens-E} & \multicolumn{6}{c}{Ens-H}\\
\diagbox{$r_a$}{$r_d$} & $0.0$ & $0.001$ & $0.01$ & $0.05$ & $0.1$ & $0.25$ & $0.5$ & $0.001$ & $0.01$ & $0.05$ & $0.1$ & $0.25$ & $0.5$ & $0.001$ & $0.01$ & $0.05$ & $0.1$ & $0.25$ & $0.5$ \\
\hline
0.0&\textbf{0.0}&2.9&3.0&3.1&3.4&3.1&3.0&4.7&4.8&5.1&5.5&5.7&30.8&3.9&4.0&4.2&4.4&4.2&4.0\\
0.1&\textbf{0.0}&2.9&3.0&3.1&3.4&3.1&3.0&4.7&4.8&5.1&5.5&5.7&30.8&3.9&4.0&4.2&4.4&4.2&4.0\\
0.2&\textbf{0.0}&2.9&3.0&3.1&3.4&3.1&3.0&4.7&4.8&5.1&5.5&5.7&30.8&3.9&4.0&4.2&4.4&4.2&4.0\\
0.3&\textbf{0.0}&2.9&3.0&3.1&3.4&3.1&3.0&4.7&4.8&5.1&5.5&5.7&30.8&3.9&4.0&4.2&4.4&4.2&4.0\\
0.4&\textbf{0.0}&2.9&3.0&3.1&3.4&3.1&3.0&4.7&4.8&5.1&5.5&5.7&30.8&3.9&4.0&4.2&4.4&4.2&4.0\\
0.5&\textbf{0.0}&2.9&3.0&3.1&3.4&3.1&3.0&4.7&4.8&5.1&5.5&5.7&30.8&3.9&4.0&4.2&4.4&4.2&4.0\\
0.75&\textbf{0.0}&2.9&3.0&3.1&3.4&3.1&3.0&4.7&4.8&5.1&5.5&5.7&30.8&3.9&4.0&4.2&4.4&4.2&4.0\\
1.0&\textbf{0.0}&2.9&3.0&3.1&3.4&3.1&3.0&4.7&4.8&5.1&5.5&5.7&30.8&3.9&4.0&4.2&4.4&4.2&4.0\\
1.25&\textbf{0.0}&2.9&3.0&3.1&3.4&3.1&3.0&4.7&4.8&5.1&5.5&5.7&30.8&3.9&4.0&4.2&4.4&4.2&4.0\\
1.5&\textbf{0.0}&2.9&3.0&3.1&3.4&3.1&3.0&4.7&4.8&5.1&5.5&5.7&30.8&3.9&4.0&4.2&4.4&4.2&4.0\\
1.75&\textbf{0.0}&2.9&3.0&3.1&3.4&3.1&3.0&4.7&4.8&5.1&5.5&5.7&30.8&3.9&4.0&4.2&4.4&4.2&4.0\\
2.0&\textbf{0.0}&2.9&3.0&3.1&3.4&3.1&3.0&4.7&4.8&5.1&5.5&5.7&30.8&3.9&4.0&4.2&4.4&4.2&4.0
\end{tabular}
\end{center}
\caption{Training time (in s) for the BCW dataset with $\ell_2$-norm defense.}
\label{tbl:time_BCW_l2}
\end{table}

\begin{table}[h!]
\tiny
\setlength{\tabcolsep}{1.5pt}
\begin{center}
\begin{tabular}{c|l|llllll|llllll|llllll}
&SVM-Ens & \multicolumn{6}{c|}{RO-SVM} & \multicolumn{6}{c|}{Ens-E} & \multicolumn{6}{c}{Ens-H}\\
\diagbox{$r_a$}{$r_d$} & $0.0$ & $0.001$ & $0.01$ & $0.05$ & $0.1$ & $0.25$ & $0.5$ & $0.001$ & $0.01$ & $0.05$ & $0.1$ & $0.25$ & $0.5$ & $0.001$ & $0.01$ & $0.05$ & $0.1$ & $0.25$ & $0.5$ \\
\hline
0.0&\textbf{98.9}&98.6&\textbf{98.9}&98.6&\textbf{98.9}&98.1&98.1&98.6&98.6&\textbf{98.9}&\textbf{98.9}&98.3&98.6&97.5&98.1&97.8&97.5&97.5&97.2\\
0.1&97.8&97.2&97.2&97.2&\textbf{98.3}&98.1&98.1&98.1&98.1&\textbf{98.3}&\textbf{98.3}&97.5&97.5&97.2&97.5&97.2&97.5&97.5&96.9\\
0.2&96.9&94.4&95.8&95.6&97.2&\textbf{97.5}&97.2&\textbf{97.5}&96.9&\textbf{97.5}&97.2&97.2&97.2&97.2&97.2&97.2&97.2&96.9&96.4\\
0.3&96.4&90.3&91.7&93.3&95.3&96.9&96.9&96.4&96.7&96.4&96.7&96.7&\textbf{97.2}&96.9&\textbf{97.2}&\textbf{97.2}&96.7&96.9&95.8\\
0.4&94.7&86.7&88.6&90.3&91.1&96.7&96.7&95.0&95.0&95.3&95.0&95.8&95.6&96.9&\textbf{97.2}&96.4&96.4&96.4&95.0\\
0.5&91.4&79.7&83.1&87.2&88.3&95.8&\textbf{96.7}&93.3&93.9&93.1&93.6&94.4&93.3&96.4&96.4&96.1&95.6&94.7&93.9\\
0.75&83.9&61.4&67.2&73.9&76.4&88.9&\textbf{94.2}&86.9&86.9&85.8&85.8&89.7&86.1&91.7&93.1&90.8&91.4&92.5&90.8\\
1.0&70.0&41.7&48.3&58.6&62.8&80.3&87.2&77.8&78.3&77.8&78.6&81.1&79.7&83.6&82.8&84.2&85.0&87.5&\textbf{88.1}\\
1.25&56.4&29.2&32.8&46.9&49.7&68.9&78.9&65.0&65.3&63.6&62.8&69.4&67.5&73.6&72.2&73.1&75.0&77.2&\textbf{80.3}\\
1.5&45.3&21.4&24.4&34.2&39.2&55.6&69.2&53.1&52.5&51.4&51.4&56.4&53.1&65.6&63.6&63.6&66.4&70.3&\textbf{73.6}\\
1.75&35.6&14.2&18.3&26.7&29.4&48.6&60.6&43.3&43.3&41.9&41.1&46.9&41.9&54.4&53.1&53.1&56.9&63.1&\textbf{64.4}\\
2.0&28.6&9.4&12.5&20.3&23.3&39.2&51.7&35.8&36.1&35.0&35.3&38.3&34.7&45.0&43.3&43.3&48.1&54.7&\textbf{59.4}
\end{tabular}
\end{center}
\caption{Accuracy (in \%) for the Digits(3) dataset with $\ell_2$-norm defense.}
\label{tbl:acc_digits3_l2}
\end{table}

\begin{table}[h!]
\tiny
\setlength{\tabcolsep}{1.5pt}
\begin{center}
\begin{tabular}{c|l|llllll|llllll|llllll}
&SVM-Ens & \multicolumn{6}{c|}{RO-SVM} & \multicolumn{6}{c|}{Ens-E} & \multicolumn{6}{c}{Ens-H}\\
\diagbox{$r_a$}{$r_d$} & $0.0$ & $0.001$ & $0.01$ & $0.05$ & $0.1$ & $0.25$ & $0.5$ & $0.001$ & $0.01$ & $0.05$ & $0.1$ & $0.25$ & $0.5$ & $0.001$ & $0.01$ & $0.05$ & $0.1$ & $0.25$ & $0.5$ \\
\hline
0.0&\textbf{0.1}&15.2&16.4&34.0&22.3&23.4&20.9&21.9&23.3&39.7&32.8&40.7&1109.0&18.5&19.5&37.6&24.7&27.2&24.3\\
0.1&\textbf{0.1}&15.2&16.4&34.0&22.3&23.4&20.9&21.9&23.3&39.7&32.8&40.7&1109.0&18.5&19.5&37.6&24.7&27.2&24.3\\
0.2&\textbf{0.1}&15.2&16.4&34.0&22.3&23.4&20.9&21.9&23.3&39.7&32.8&40.7&1109.0&18.5&19.5&37.6&24.7&27.2&24.3\\
0.3&\textbf{0.1}&15.2&16.4&34.0&22.3&23.4&20.9&21.9&23.3&39.7&32.8&40.7&1109.0&18.5&19.5&37.6&24.7&27.2&24.3\\
0.4&\textbf{0.1}&15.2&16.4&34.0&22.3&23.4&20.9&21.9&23.3&39.7&32.8&40.7&1109.0&18.5&19.5&37.6&24.7&27.2&24.3\\
0.5&\textbf{0.1}&15.2&16.4&34.0&22.3&23.4&20.9&21.9&23.3&39.7&32.8&40.7&1109.0&18.5&19.5&37.6&24.7&27.2&24.3\\
0.75&\textbf{0.1}&15.2&16.4&34.0&22.3&23.4&20.9&21.9&23.3&39.7&32.8&40.7&1109.0&18.5&19.5&37.6&24.7&27.2&24.3\\
1.0&\textbf{0.1}&15.2&16.4&34.0&22.3&23.4&20.9&21.9&23.3&39.7&32.8&40.7&1109.0&18.5&19.5&37.6&24.7&27.2&24.3\\
1.25&\textbf{0.1}&15.2&16.4&34.0&22.3&23.4&20.9&21.9&23.3&39.7&32.8&40.7&1109.0&18.5&19.5&37.6&24.7&27.2&24.3\\
1.5&\textbf{0.1}&15.2&16.4&34.0&22.3&23.4&20.9&21.9&23.3&39.7&32.8&40.7&1109.0&18.5&19.5&37.6&24.7&27.2&24.3\\
1.75&\textbf{0.1}&15.2&16.4&34.0&22.3&23.4&20.9&21.9&23.3&39.7&32.8&40.7&1109.0&18.5&19.5&37.6&24.7&27.2&24.3\\
2.0&\textbf{0.1}&15.2&16.4&34.0&22.3&23.4&20.9&21.9&23.3&39.7&32.8&40.7&1109.0&18.5&19.5&37.6&24.7&27.2&24.3
\end{tabular}
\end{center}
\caption{Training time (in s) for the Digits(3) dataset with $\ell_2$-norm defense.}
\label{tbl:time_digits3_l2}
\end{table}

\begin{table}[h!]
\tiny
\setlength{\tabcolsep}{1.5pt}
\begin{center}
\begin{tabular}{c|l|llllll|llllll|llllll}
&SVM-Ens & \multicolumn{6}{c|}{RO-SVM} & \multicolumn{6}{c|}{Ens-E} & \multicolumn{6}{c}{Ens-H}\\
\diagbox{$r_a$}{$r_d$} & $0.0$ & $0.001$ & $0.01$ & $0.05$ & $0.1$ & $0.25$ & $0.5$ & $0.001$ & $0.01$ & $0.05$ & $0.1$ & $0.25$ & $0.5$ & $0.001$ & $0.01$ & $0.05$ & $0.1$ & $0.25$ & $0.5$ \\
\hline
0.0&\textbf{99.7}&99.4&99.4&99.4&99.4&\textbf{99.7}&99.4&\textbf{99.7}&99.4&99.4&\textbf{99.7}&\textbf{99.7}&\textbf{99.7}&99.4&99.4&99.4&\textbf{99.7}&\textbf{99.7}&99.4\\
0.1&\textbf{99.7}&99.4&99.4&99.4&99.4&99.4&99.4&99.2&99.2&99.2&99.4&\textbf{99.7}&99.4&99.4&99.4&99.4&\textbf{99.7}&99.2&98.6\\
0.2&\textbf{99.7}&98.3&98.3&98.6&99.4&98.9&99.4&99.2&99.2&99.2&98.9&99.4&99.4&99.4&99.4&99.4&\textbf{99.7}&98.9&98.6\\
0.3&98.9&96.4&97.5&98.1&98.1&98.6&\textbf{99.2}&98.9&98.9&98.9&98.9&98.9&\textbf{99.2}&\textbf{99.2}&\textbf{99.2}&98.6&98.9&98.3&98.6\\
0.4&98.6&92.5&95.0&96.4&97.2&96.4&\textbf{98.9}&\textbf{98.9}&\textbf{98.9}&\textbf{98.9}&\textbf{98.9}&98.6&\textbf{98.9}&98.3&98.3&98.3&98.6&98.3&98.3\\
0.5&97.5&88.6&91.1&94.2&95.0&94.7&98.3&98.3&98.3&98.3&98.3&\textbf{98.6}&\textbf{98.6}&98.3&98.3&98.1&98.3&97.8&97.8\\
0.75&94.7&65.3&76.7&85.3&87.8&89.2&\textbf{96.4}&96.1&96.1&95.6&95.8&\textbf{96.4}&95.8&95.6&95.6&95.8&\textbf{96.4}&\textbf{96.4}&95.6\\
1.0&86.4&38.9&53.6&67.5&73.1&77.2&\textbf{94.7}&92.5&92.5&92.5&92.2&94.4&93.1&93.6&93.6&93.6&94.4&94.2&93.9\\
1.25&78.3&21.7&32.8&48.3&55.0&65.8&90.6&85.8&85.6&85.8&85.8&88.6&86.9&88.6&88.9&89.4&90.6&\textbf{91.4}&\textbf{91.4}\\
1.5&65.0&12.8&20.3&31.4&37.8&53.6&84.7&78.9&78.6&78.9&79.7&81.9&80.3&80.8&80.8&81.1&85.8&87.5&\textbf{89.4}\\
1.75&52.2&10.0&12.2&20.8&25.3&38.6&78.3&69.4&69.4&69.2&69.4&73.3&69.7&72.2&71.9&72.5&80.3&81.9&\textbf{83.6}\\
2.0&41.9&7.5&10.3&13.1&15.3&26.1&70.8&57.8&58.3&58.1&57.2&62.8&60.8&65.0&65.6&65.3&74.4&73.3&\textbf{75.6}
\end{tabular}
\end{center}
\caption{Accuracy (in \%) for the Digits(7) dataset with $\ell_2$-norm defense.}
\label{tbl:acc_digits7_l2}
\end{table}

\begin{table}[h!]
\tiny
\setlength{\tabcolsep}{1.5pt}
\begin{center}
\begin{tabular}{c|l|llllll|llllll|llllll}
&SVM-Ens & \multicolumn{6}{c|}{RO-SVM} & \multicolumn{6}{c|}{Ens-E} & \multicolumn{6}{c}{Ens-H}\\
\diagbox{$r_a$}{$r_d$} & $0.0$ & $0.001$ & $0.01$ & $0.05$ & $0.1$ & $0.25$ & $0.5$ & $0.001$ & $0.01$ & $0.05$ & $0.1$ & $0.25$ & $0.5$ & $0.001$ & $0.01$ & $0.05$ & $0.1$ & $0.25$ & $0.5$ \\
\hline
0.0&\textbf{0.1}&12.6&13.8&13.8&13.6&29.0&20.8&16.4&17.8&18.1&18.8&34.5&190.7&15.3&16.6&16.5&16.5&31.5&22.5\\
0.1&\textbf{0.1}&12.6&13.8&13.8&13.6&29.0&20.8&16.4&17.8&18.1&18.8&34.5&190.7&15.3&16.6&16.5&16.5&31.5&22.5\\
0.2&\textbf{0.1}&12.6&13.8&13.8&13.6&29.0&20.8&16.4&17.8&18.1&18.8&34.5&190.7&15.3&16.6&16.5&16.5&31.5&22.5\\
0.3&\textbf{0.1}&12.6&13.8&13.8&13.6&29.0&20.8&16.4&17.8&18.1&18.8&34.5&190.7&15.3&16.6&16.5&16.5&31.5&22.5\\
0.4&\textbf{0.1}&12.6&13.8&13.8&13.6&29.0&20.8&16.4&17.8&18.1&18.8&34.5&190.7&15.3&16.6&16.5&16.5&31.5&22.5\\
0.5&\textbf{0.1}&12.6&13.8&13.8&13.6&29.0&20.8&16.4&17.8&18.1&18.8&34.5&190.7&15.3&16.6&16.5&16.5&31.5&22.5\\
0.75&\textbf{0.1}&12.6&13.8&13.8&13.6&29.0&20.8&16.4&17.8&18.1&18.8&34.5&190.7&15.3&16.6&16.5&16.5&31.5&22.5\\
1.0&\textbf{0.1}&12.6&13.8&13.8&13.6&29.0&20.8&16.4&17.8&18.1&18.8&34.5&190.7&15.3&16.6&16.5&16.5&31.5&22.5\\
1.25&\textbf{0.1}&12.6&13.8&13.8&13.6&29.0&20.8&16.4&17.8&18.1&18.8&34.5&190.7&15.3&16.6&16.5&16.5&31.5&22.5\\
1.5&\textbf{0.1}&12.6&13.8&13.8&13.6&29.0&20.8&16.4&17.8&18.1&18.8&34.5&190.7&15.3&16.6&16.5&16.5&31.5&22.5\\
1.75&\textbf{0.1}&12.6&13.8&13.8&13.6&29.0&20.8&16.4&17.8&18.1&18.8&34.5&190.7&15.3&16.6&16.5&16.5&31.5&22.5\\
2.0&\textbf{0.1}&12.6&13.8&13.8&13.6&29.0&20.8&16.4&17.8&18.1&18.8&34.5&190.7&15.3&16.6&16.5&16.5&31.5&22.5
\end{tabular}
\end{center}
\caption{Training time (in s) for the Digits(7) dataset with $\ell_2$-norm defense.}
\label{tbl:time_digits7_l2}
\end{table}

\end{document}